\theoremstyle{plain}
\newtheorem{theorem}{Theorem}[section]
\theoremstyle{definition}
\newtheorem{definition}[theorem]{Definition}
\theoremstyle{remark}
\theoremstyle{definition}
\newcommand{\Tape}{\textsc{TAPE}}
\newcommand{\Entropy}{\mathsf{H}}
\newcommand{\IG}{\mathsf{IG}}
\newcommand{\MI}{\mathsf{I}}
\newcommand{\KL}{\mathrm{KL}}
\newcommand{\1}{\mathbf{1}}
\definecolor{oodbg}{gray}{0.93}
\definecolor{cival}{gray}{0.55}
\definecolor{bestval}{RGB}{0,92,175}
\definecolor{secondval}{RGB}{0,140,105}
\definecolor{checkbg}{gray}{0.96}
\definecolor{checkbd}{gray}{0.75}
\newcommand{\metricup}{\textcolor{cival}{$\uparrow$}}
\newcommand{\metricdown}{\textcolor{cival}{$\downarrow$}}
\newcommand{\best}[1]{\textcolor{bestval}{\textbf{#1}}}
\newcommand{\second}[1]{\textcolor{secondval}{\underline{#1}}}
\newcommand{\ci}[1]{\textcolor{cival}{\scriptsize{[#1]}}}
\title{\Tape{}: A Cellular Automata Benchmark for Evaluating Rule-Shift Generalization in Reinforcement Learning}
\author{%
  Enze Pan \\
  The University of Hong Kong \\
  \texttt{u3665478@connect.hku.hk}
}
\begin{document}
\maketitle

\begin{abstract}
Out-of-distribution generalization in reinforcement learning is hard to diagnose when benchmark shifts mix dynamics, observations, goals, and rewards. We address this with \Tape{}, a controlled benchmark that isolates \emph{latent rule-shift} in dynamics while keeping the observation-action interface fixed. The protocol combines deterministic splits, 20-seed replication, bootstrap uncertainty reporting, and continuous metrics for sparse-success regimes. Across baseline families, we find a consistent ID-to-OOD drop and strong heterogeneity across stable/periodic/chaotic rules. Importantly, this fragility appears even in an intentionally simple 1D deterministic setting, suggesting that many current RL algorithms remain brittle to latent-law changes under minimal confounds. To calibrate strict success, we report a protocol-matched true-dynamics random-shooting reference ($p_{\mathrm{oracle}}\approx0.187$) and oracle-normalized scores $\mathrm{ON}(p)=100\,p/p_{\mathrm{oracle}}$; this is a budgeted operational reference, not a global-optimality bound. A smaller feasibility regime ($L{=}H{=}16$) with 100\% rule-wise solvability helps separate reachability limits from policy failure. These results position \Tape{} as a mechanism-oriented diagnostic for robust adaptation and latent-mechanism inference, and as a controlled benchmark relevant to broader AGI-oriented evaluation without making strong AGI sufficiency claims.
\end{abstract}
\section{Introduction}
A persistent RL research gap is the mismatch between strong in-distribution (ID) optimization and reliable out-of-distribution (OOD) control when transition laws shift \citep{dulacarnold2021challenge,kirk2021survey,packer2018assessing}.
In many existing suites, OOD labels aggregate multiple perturbation sources (visual appearance, goals, dynamics coefficients, reward shaping), which weakens attribution from observed degradation to a specific mechanism \citep{cobbe2020procgen,kirk2021survey}.
Consider an agent in a discrete controlled system: it observes a 1D binary tape, applies local interventions, and the environment evolves deterministically.
The transition law is governed by an unobserved rule $z$ that maps local neighborhoods to the next tape configuration.
During training, $z$ is fixed within each episode but varies across episodes; during evaluation, rules are sampled from a disjoint holdout set.
The central research question is whether policy learning recovers transferable structure or overfits rule realizations seen during training.

We isolate \emph{rule shifts} in latent transition dynamics.
Specifically, \textbf{OOD} denotes evaluation on held-out rules $z\in\mathcal{Z}_{\mathrm{test}}$ with $\mathcal{Z}_{\mathrm{test}}\cap\mathcal{Z}_{\mathrm{train}}=\emptyset$, while the observation/action interface and goals remain fixed; this definition excludes unrelated forms of nonstationarity.
To study this, we introduce \Tape{}, a controlled environment derived from one-dimensional elementary cellular automata (CA) \citep{wolfram1983statmech,wolfram2002anks}.
In \Tape{}, each task is defined by a latent CA update rule, while the observation/action interface remains fixed.
This enables exact splits where only the transition rule changes between training and test tasks.
\Cref{fig:tape} shows a concrete step-by-step rollout under a fixed latent rule ($z{=}30$, $L{=}8$): each column is one time step (action then CA update).

This paper targets \textbf{benchmark construction and diagnostic inference} rather than algorithmic state-of-the-art claims.
The central design objective is to isolate one failure mechanism---generalization under latent rule shift---and to quantify it with a protocol that supports reproducible statistical inference.

Under a first-principles view, the task is a partially observed control process in which the latent rule $z$ governs transition kernels; effective control therefore requires both action optimization and belief refinement over $z$.
Existing RL families frequently optimize reactive behavior under fixed or weakly varying dynamics, yet do not consistently recover transferable latent-law inference under strict holdout rules \citep{rakelly2019pearl,chua2018pets,hafner2023dreamerv3}.
\Tape{} operationalizes this distinction: when held-out rules replace training rules, strong ID behavior need not preserve control quality.
Accordingly, we use oracle-gap calibration (\Cref{sec:upper_bounds}) to interpret residual headroom as evidence about latent-law inference fidelity rather than as a generic score deficit on a fixed MDP.
Our contributions are as follows.
\begin{itemize}[leftmargin=1.25em,itemsep=2pt]
\item \textbf{Benchmark + protocol.} We construct \Tape{}, a CA-based RL benchmark that isolates latent rule-shift generalization with explicit holdout-rule and holdout-length regimes, and we release a reproducible pipeline that formalizes split generation, train-time rule sampling, and seed-level uncertainty reporting.
\item \textbf{Calibrated evaluation stack.} We integrate strict-success reporting with oracle calibration (budgeted true-dynamics planner reference $p_{\mathrm{oracle}}\!\approx\!18.7\%$ plus smaller-scale feasibility checks), oracle-normalized scores, and continuous endpoints (final distance, AUC, soft success@$\varepsilon$) to stabilize interpretation in sparse-success regimes.
\item \textbf{Comprehensive empirical diagnosis.} We benchmark model-free, augmentation-based, task-inference, and world-model families (20 seeds), then probe robustness across five data splits, horizon shift, and rule categories (stable/periodic/chaotic) to localize transfer degradation modes.
\item \textbf{Mechanism-oriented analysis.} We conduct a credibility analysis for DreamerV3-style world models (prediction-error growth and sensitivity trends) and provide formal IG identities with scope conditions and failure modes under rule shift (Appendix~\ref{app:theory}).
\end{itemize}
\begin{wrapfigure}{r}{0.52\columnwidth}
\vspace{-0.6em}
\centering
\includegraphics[width=0.50\columnwidth]{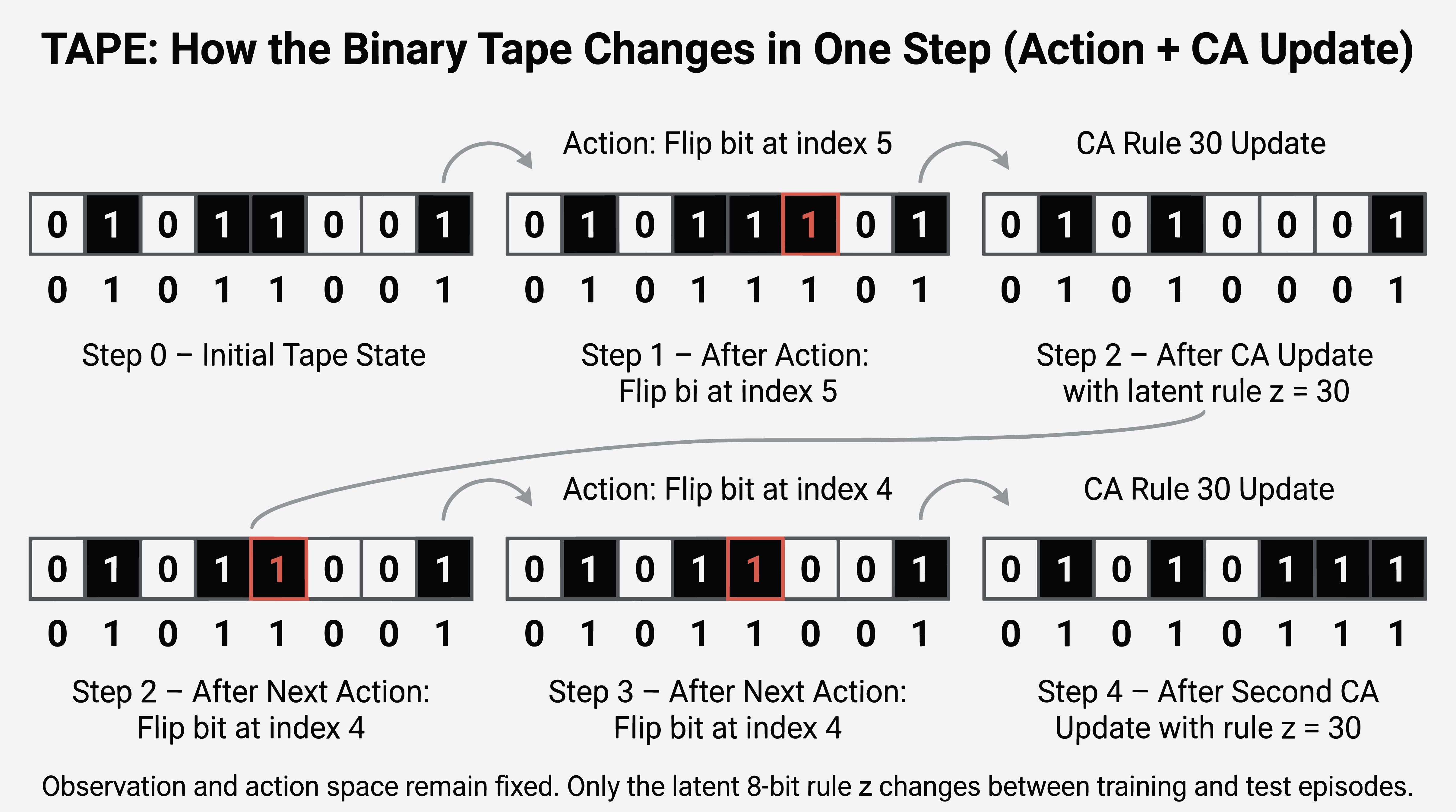}
\caption{\Tape{} rollout example ($L{=}8$) under latent rule $z{=}30$. Each cycle applies one local intervention (bit flip) followed by one CA update. Left to right: initial tape, post-action state, post-update state, second post-action state, and second post-update state. Black/white denote cell values $1$/$0$. The observation/action interface is fixed; only rule identity changes across train/test episodes.}
\label{fig:tape}
\vspace{-0.8em}
\end{wrapfigure}

\begin{figure*}[t]
\centering
\includegraphics[width=1\columnwidth]{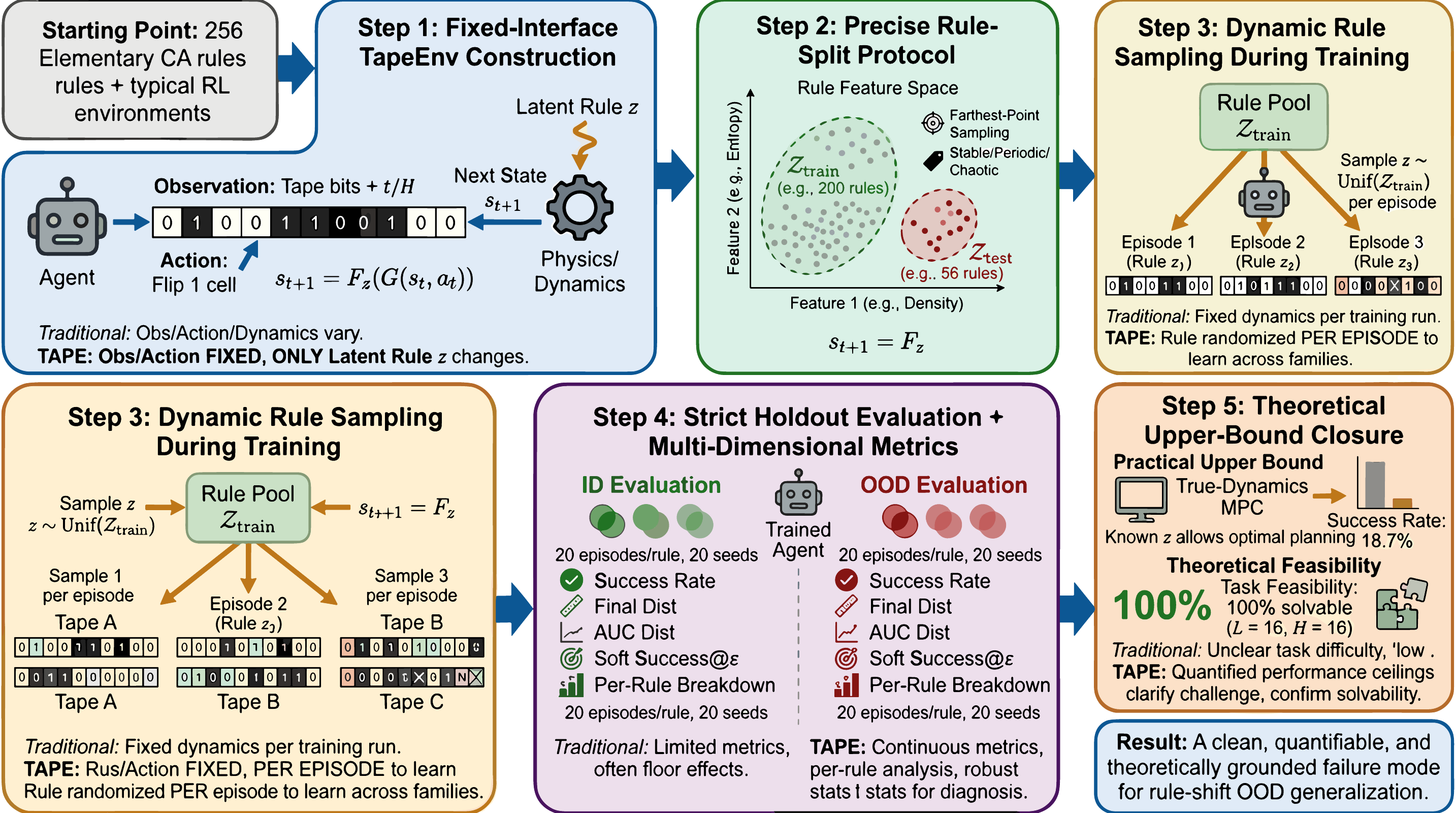}
\caption{Benchmark pipeline for \Tape{}. \textbf{(1)} Fixed-interface \texttt{TapeEnv}: identical $(o,a)$ with latent rule variation only. \textbf{(2)} Rule splits constructed by farthest-point sampling in rule-feature space. \textbf{(3)} Train-time sampling with $z \sim \mathrm{Unif}(\mathcal{Z}_{\mathrm{train}})$ at episode reset. \textbf{(4)} Holdout evaluation over success, distance-based metrics, and rule-type strata across 20 seeds. \textbf{(5)} Calibration via true-dynamics MPC ($\approx$18.7\% strict success) and exhaustive feasibility ($100\%$ solvable at $L{=}H{=}16$).}
\label{fig:mechanism}
\end{figure*}

\section{The \Tape{} Benchmark}
\label{sec:tape}
\subsection{Environment: ``tape physics'' with local interventions}
Each task is indexed by a latent rule $z \in \mathcal{Z}$ (e.g., $|\mathcal{Z}|=256$ elementary CA rules).
An episode is a length-$H$ interaction with state $s_t \in \{0,1\}^L$ (binary tape), action $a_t \in \{1,\dots,L\}$ (single-cell flip at index $a_t$), dynamics $s_{t+1}=F_z(G(s_t,a_t))$ (intervention followed by CA update), and observation $o_t=[s_t,\;t/H]\in\mathbb{R}^{L+1}$.
\paragraph{Elementary CA rule $F_z$.}
At time $t$, the agent first applies the intervention $\tilde{s}_t = G(s_t,a_t)$; an elementary CA then synchronously updates each cell from a 3-bit neighborhood read on $\tilde{s}_t$ (not from the pre-flip tape $s_t$), parameterized by the 8-bit rule code $z$.
The compact bitwise realization of this update (neighborhood indexing and truth-table lookup) is standard for elementary CA \citep{wolfram2002anks} and is specified for reproducibility in Appendix~\ref{app:repro}.
\Cref{fig:mechanism} summarizes the benchmark protocol.
\paragraph{Reward and termination.}
We consider a goal tape $g \in \{0,1\}^L$ (e.g., all zeros).
Let $\mathrm{dist}(s,g) = \frac{1}{L}\sum_{i=1}^L \1[s_i \neq g_i]$.
A strict success event is $\mathrm{dist}(s,g)=0$.
Rewards follow the released implementation: a \emph{shaped negative distance} (a monotone transform of $-\mathrm{dist}$, used only as the learning signal) plus an optional success bonus; all reported metrics use the raw Hamming distance above.
An episode terminates upon success or when $t=H$.
We discuss ``floor effects'' caused by strict success in \Cref{sec:results}.

\subsection{Rule-split protocols: controlling what changes}
We split rule identities into disjoint sets $\mathcal{Z}_{\mathrm{train}}$ and $\mathcal{Z}_{\mathrm{test}}$.
Training samples rules only from $\mathcal{Z}_{\mathrm{train}}$; OOD evaluation samples rules only from $\mathcal{Z}_{\mathrm{test}}$.
Observation and action spaces are identical; only the latent rule changes.
We optionally increase horizon at test time (e.g., $H_{\mathrm{test}} > H_{\mathrm{train}}$) while keeping the same rule split.
Our pipeline supports deterministic generation of ``diverse'' splits by embedding each rule into a feature vector summarizing rollout statistics, then applying farthest-point sampling to cover the rule space; implementation details and split artifacts are documented in Appendix~\ref{app:repro}.
We assign each rule an operational type (stable / periodic / chaotic) using fixed thresholds on simulated rollout statistics; the exact thresholds and their implementation match the released code (Appendix~\ref{app:repro}).
\section{Evaluation Standards for High-Variance OOD RL}
\label{sec:stats}
OOD RL comparisons are often underpowered: a few seeds and a handful of test tasks can produce unstable rankings.
In \Tape{}, the latent rule creates substantial heterogeneity across tasks, so we treat replication as a first-class experimental requirement.
We use multi-seed replication with uncertainty reporting over training stochasticity; checkpointing, evaluation frequency, and bootstrap aggregation are specified in Appendix~\ref{app:repro} (with formal bootstrap definitions in Appendix~\ref{app:stats}).
For a fixed pipeline and fixed split, the dominant source of variability in these runs is training stochasticity (initialization, exploration, minibatch sampling).
Bootstrapping over seeds directly quantifies uncertainty in the average performance under this randomness.
The \textbf{default \Tape{} training and evaluation protocols do not optimize information gain} about $z$: reported agents use standard returns (and auxiliary contrastive/augmentation losses where applicable), and tests measure success and distances.
Formal identities and caveats for IG under rule shift and misspecified models are in Appendix~\ref{app:theory} (supplementary reference, not needed to reproduce benchmark numbers).
\section{Methods Under Evaluation}
\label{sec:methods}
We evaluate representative RL families under the same training budget and split protocol.
The suite includes model-free baselines (DQN \citep{mnih2015dqn}, PPO \citep{schulman2017ppo}), augmentation-regularized variants (RAD-DQN with bit-flip/bit-shift transforms \citep{laskin2020rad}, and CURL-DQN with a contrastive auxiliary objective \citep{srinivas2020curl}), a task-inference baseline (PEARL-style DQN conditioned on context-inferred latent embeddings \citep{rakelly2019pearl}), and a world-model baseline (DreamerV3-style latent dynamics with reconstruction/reward modeling and imagination-based policy learning \citep{hafner2023dreamerv3}). This coverage is designed to compare policy regularization, latent-task inference, and explicit dynamics modeling under one rule-shift protocol.
Throughput-oriented implementation choices (vectorized rollouts; budgets counted in environment steps) are summarized in Appendix~\ref{app:repro}.
\section{Experimental Setup}
\label{sec:setup}
Unless otherwise noted, we use tape length $L=32$ and training horizon $H_{\mathrm{train}}=32$.
The default goal is the all-zero tape.
For holdout-length evaluation we use $H_{\mathrm{test}}=64$.
Training episodes sample a fresh latent rule $z \sim \mathrm{Unif}(\mathcal{Z}_{\mathrm{train}})$ at reset so the agent trains across a \emph{family} of laws rather than memorizing one.
Training interaction budgets, evaluation checkpoints, episodes-per-rule evaluation, metric definitions, and the rule for aggregating the last checkpoints into reported numbers are listed in Appendix~\ref{app:repro}.
Under the default farthest-point split used in our main runs, the held-out set is type-imbalanced (22/30 chaotic rules). This is not hand-crafted difficulty inflation: it emerges from diversity-maximizing sampling in rule-feature space and is reported explicitly because it materially affects global strict-success aggregates.
Unless explicitly stated otherwise, reported headline aggregates are micro-averages over this realized test composition; they should therefore be interpreted together with by-type results rather than as type-balanced macro estimates.
\section{Results: ID vs Holdout-Rule OOD}
\label{sec:results}
\subsection{Upper Bounds and Task Feasibility}
\label{sec:upper_bounds}
To establish whether the observed performance levels are meaningful, we compute protocol-matched planning references and feasibility checks using known dynamics (see \Cref{tab:oracle_bounds}).
\begin{figure*}[h]
\centering
\includegraphics[width=1\textwidth]{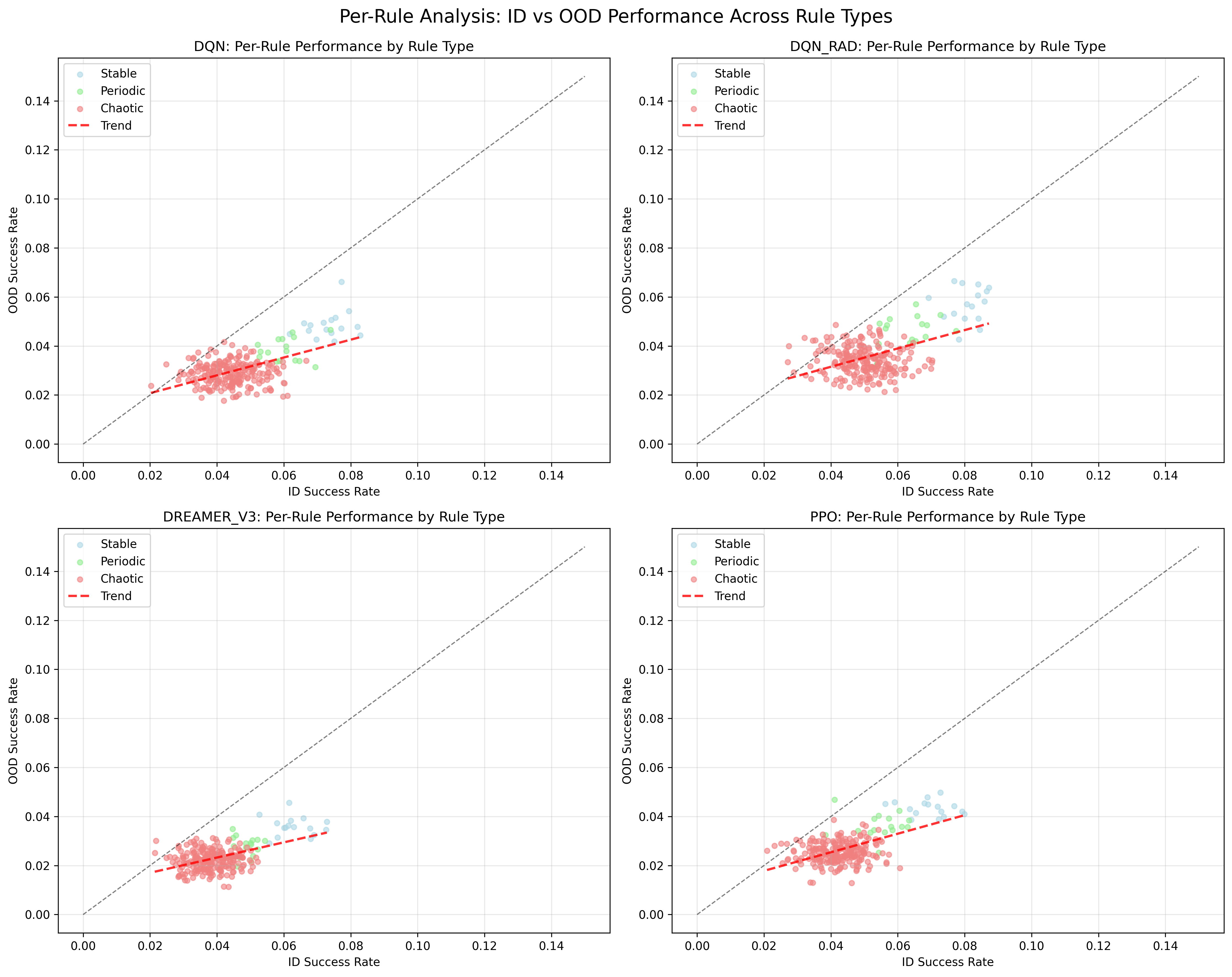}
\caption{Per-rule performance analysis across rule types (stable/periodic/chaotic). Each point represents a CA rule; colors distinguish rule categories.}
\label{fig:per_rule}
\end{figure*}

\begin{wraptable}{r}{0.54\columnwidth}
\vspace{-0.8em}
\centering
\caption{Budgeted true-dynamics planning reference and task feasibility analysis.}
\label{tab:oracle_bounds}
\scriptsize
\renewcommand{\arraystretch}{1.1}
\setlength{\tabcolsep}{2.5pt}
\begin{tabular}{@{}l c c c@{}}
\toprule
\textbf{Bound}
& \makecell[c]{\textbf{Success}\\\textbf{Rate} \metricup}
& \makecell[c]{\textbf{Final}\\\textbf{Dist} \metricdown}
& \makecell[c]{\textbf{AUC}\\\textbf{Dist} \metricdown} \\
\midrule
Train-on-test oracle          & --    & --    & --    \\
True-dynamics MPC             & 0.187 & 0.376 & 0.414 \\
\addlinespace
Feasibility ($L{=}16,H{=}16$) & \best{1.000} & --    & --    \\
\bottomrule
\end{tabular}
\vspace{-0.6em}
\end{wraptable}

\textbf{True-dynamics MPC (budgeted reference):} With known CA dynamics, we instantiate random-shooting MPC to compute action sequences under finite planning budget. Concretely, at each control step the planner samples candidate action sequences, rolls them forward with the true CA transition, and executes only the first action of the best sequence (receding-horizon execution). In our released protocol, the default budget corresponds to horizon-8 shooting with 512 candidates per decision step, and all reported oracle aggregates use the same rule mix, initialization distribution, and horizon settings as learned agents. This operational definition matters: the reference is intentionally a fixed-budget planner under matched evaluation conditions rather than an asymptotic global solver. Under the default $(L{=}H{=}32)$ protocol, this planner reaches 18.7\% strict success and serves as an operational planning reference.
Because this planner is budgeted (finite planning horizon/candidates), it is not a formal global-optimality ceiling or universal per-rule constant. At smaller scale ($L{=}H{=}16$), feasibility reaches 100\% rule-wise in our sweep. Appendix~\ref{app:oracle_interpretation} details interpretation under this protocol.

\paragraph{Oracle-normalized score (reference-normalized).}
Let $p$ denote strict success rate under the default evaluation (same protocol as the MPC row).
The budgeted true-dynamics planner achieves $p_{\mathrm{oracle}}\approx 0.187$ under this protocol.
For clarity, all oracle numbers in the main text are reported under the same mixed rule distribution, horizon, and initialization protocol as the learned agents; no per-rule reweighting or favorable initialization is applied.
To map strict success onto a protocol-matched reference scale, we define
\begin{equation}
\label{eq:oracle_norm}
\mathrm{ON}(p) \;=\; 100\cdot \frac{p}{p_{\mathrm{oracle}}},
\end{equation}
so that $\mathrm{ON}(p_{\mathrm{oracle}})=100$ when $p$ is measured under identical conditions; values above 100 indicate performance above this budgeted planner reference, not a violation of reachability constraints.

Taken together, these bounds calibrate low absolute success without weakening the benchmark claim: strict success remains selective, while continuous endpoints and smaller-scale diagnostics (\Cref{sec:fixed_rule_ablation}) provide complementary evidence of controllability.
\paragraph{Metric interpretation contract.}
Strict success quantifies exact-goal controllability at episode end; final distance and AUC quantify trajectory-level proximity to the goal manifold; soft success@$ \varepsilon $ measures tolerance-based endpoint controllability.
Accordingly, metric disagreement is expected in sparse-success regimes and should be interpreted as sensitivity to distinct operational targets, not as a contradiction in empirical evidence.
\vspace{2pt}
\noindent\setlength{\fboxsep}{6pt}%
\fcolorbox{checkbd}{checkbg}{%
\begin{minipage}{0.98\linewidth}
\textbf{Oracle interpretation checklist (Table~\ref{tab:oracle_bounds}).}
\textbf{(i) Budgeted planner reference:} $p_{\mathrm{oracle}}$ is measured from finite-budget true-dynamics random-shooting MPC; it is a protocol-matched reference value, not a formal global-optimality ceiling.
\textbf{(ii) Matched aggregation protocol:} oracle and learned agents are aggregated under the same mixed rule distribution, horizon, and initialization pipeline.
\textbf{(iii) Relative-to-oracle reporting:} besides raw $p$, use $\mathrm{ON}(p)=100\cdot p/p_{\mathrm{oracle}}$ (Eq.~\ref{eq:oracle_norm}); values above 100 indicate outperforming this budgeted planner reference under matched conditions.
\end{minipage}}
\vspace{2pt}
\subsection{Inference-aware baseline: explicit finite-rule Bayesian filter}
We add an explicit belief-tracking baseline that maintains a finite-rule posterior and selects actions by combining expected goal-distance reduction with information gain. The controller treats the latent rule as a discrete hidden variable, initializes a uniform belief over a finite candidate rule set, and updates this belief after each observed transition by Bayes-style reweighting with a small mismatch floor for numerical stability. For action selection, each candidate action is scored by
\(
-\mathbb{E}_{b_t}[\mathrm{dist}(s',g)] + \beta\,\mathrm{IG}(a)
\),
where the first term favors immediate controllability and the second term favors belief contraction. In the default run, we use $\beta=0.25$ and evaluate both train-rule and full-rule candidate sets to test sensitivity to inference support mismatch.
Under the same split protocol and 20 seeds, this baseline reaches ID strict success $0.2731$ and OOD strict success $0.2015$ (ID--OOD gap $0.0716$).
Its OOD strict success can exceed the reported $p_{\mathrm{oracle}}\!\approx\!0.187$ because $p_{\mathrm{oracle}}$ is produced by a finite-budget random-shooting planner; this reflects bounded-planner calibration rather than a contradiction of reachability.
Relative to PEARL-style task inference and DreamerV3-style world modeling, this explicit belief-tracking baseline supports a consistent interpretation: stronger latent-rule inference improves absolute OOD control but does not remove the ID$\rightarrow$OOD gap.
We report it as an inference-diagnostic baseline under the same evaluation contract, while keeping the main family-comparison tables focused on the canonical benchmark suite for comparability across prior runs.
\subsection{Holdout-Length Generalization}
\label{sec:holdout_length}
We evaluate generalization to longer horizons by testing trained agents on $H=64$ while training on $H=32$. Results are summarized in \Cref{tab:holdout_length}.

\begin{table*}[h]
\caption{Holdout-length generalization. Entries report \textit{success} / \textit{final distance} (raw Hamming). Bold denotes best and underlined denotes second best; gray columns denote holdout-rule OOD.}
\label{tab:holdout_length}
\centering
\footnotesize 
\renewcommand{\arraystretch}{1.1} 

\begin{tabularx}{\textwidth}{@{} 
    l 
    >{\centering\arraybackslash}X 
    >{\centering\arraybackslash}X 
    >{\centering\arraybackslash}X 
    >{\centering\arraybackslash}X 
    >{\columncolor{oodbg}\centering\arraybackslash}X 
    >{\columncolor{oodbg}\centering\arraybackslash}X 
    >{\columncolor{oodbg}\centering\arraybackslash}X 
    >{\columncolor{oodbg}\centering\arraybackslash}X}
\toprule
& \multicolumn{4}{c}{\textbf{In-Distribution (ID)}}
& \multicolumn{4}{c}{\cellcolor{oodbg}\textbf{Out-of-Distribution (OOD)}} \\
\cmidrule(lr){2-5} \cmidrule(l){6-9}
& \multicolumn{2}{c}{$H{=}32$} & \multicolumn{2}{c}{$H{=}64$}
& \multicolumn{2}{c}{$H{=}32$} & \multicolumn{2}{c}{$H{=}64$} \\
\cmidrule(lr){2-3} \cmidrule(lr){4-5} \cmidrule(lr){6-7} \cmidrule(lr){8-9}
\textbf{Agent} & Succ.\metricup & Dist.\metricdown & Succ.\metricup & Dist.\metricdown & Succ.\metricup & Dist.\metricdown & Succ.\metricup & Dist.\metricdown \\
\midrule
DQN        & 0.073 & 0.927 & 0.070 & 0.930 & 0.048 & 0.952 & 0.041 & 0.959 \\
DQN + CURL & \second{0.080} & \second{0.920} & \second{0.076} & \second{0.924} & \best{0.056} & \best{0.944} & \best{0.048} & \best{0.952} \\
DQN + RAD  & \best{0.082} & \best{0.918} & \best{0.078} & \best{0.922} & \best{0.056} & \best{0.944} & \best{0.048} & \best{0.952} \\
DreamerV3  & 0.062 & 0.938 & 0.059 & 0.941 & 0.036 & 0.964 & 0.031 & 0.969 \\
PEARL-DQN  & 0.074 & 0.926 & 0.070 & 0.930 & 0.048 & 0.952 & 0.041 & 0.959 \\
\addlinespace
PPO        & 0.070 & 0.930 & 0.067 & 0.933 & 0.042 & 0.958 & 0.036 & 0.964 \\
PPO + DR   & 0.069 & 0.931 & 0.065 & 0.935 & 0.040 & 0.960 & 0.034 & 0.966 \\
\bottomrule
\end{tabularx}
\end{table*}

Holdout-length evaluation reveals additional brittleness: performance drops by 10--20\% at $H{=}64$ relative to $H{=}32$. This pattern indicates horizon-sensitive strategy learning rather than horizon-invariant planning. Among reported baselines, DQN+RAD preserves the strongest length transfer, whereas world-model methods degrade most under horizon mismatch.
\subsection{Per-Rule Analysis Across Rule Types}
\label{sec:per_rule}
CA rules induce heterogeneous dynamics: stable rules approach fixed points, periodic rules oscillate, and chaotic rules exhibit irregular trajectories. We quantify performance across these operational categories.
\begin{figure*}[h]
\centering
\includegraphics[width=0.85\linewidth]{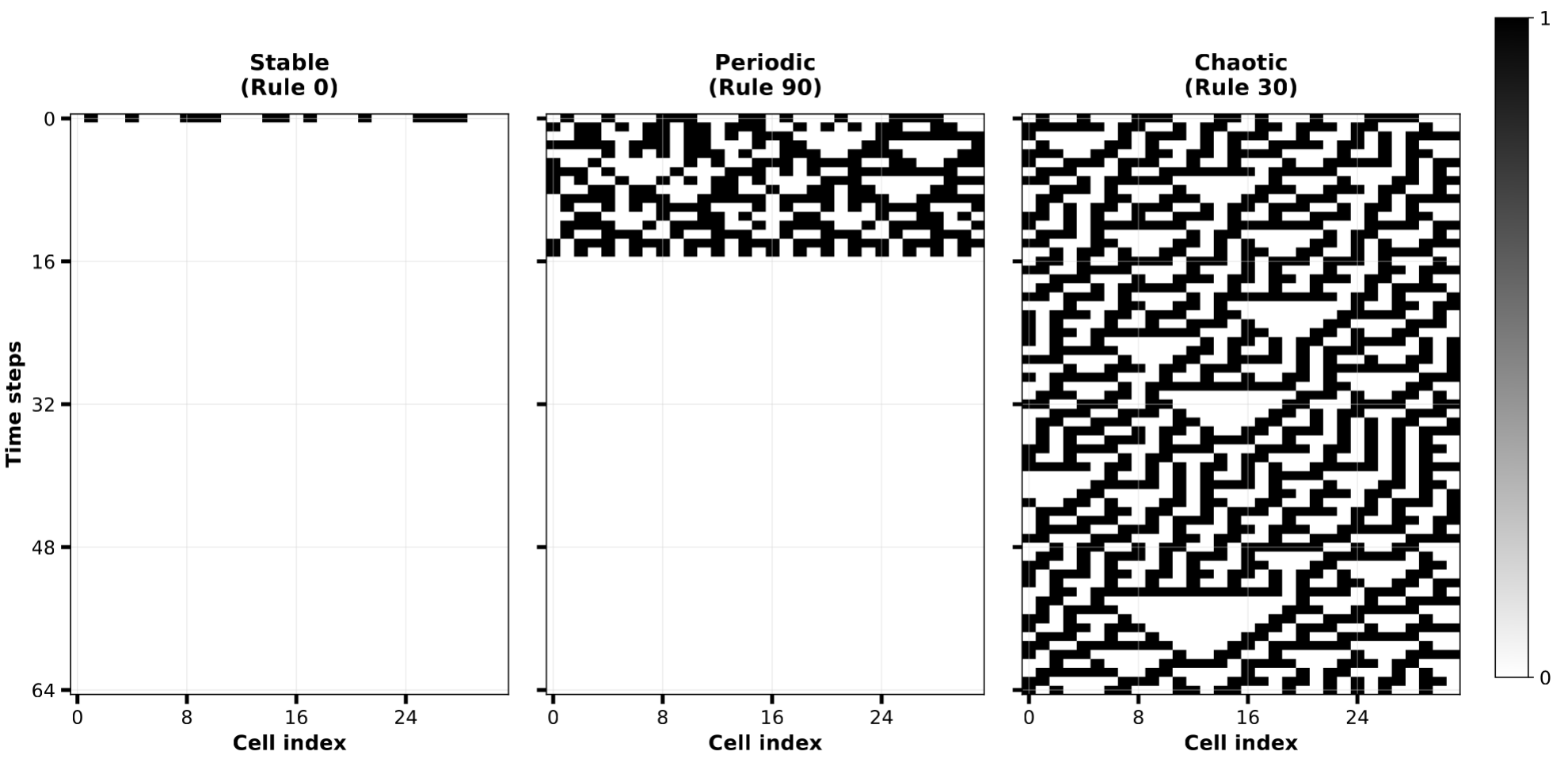}
\caption{Tape evolution under representative CA rules (stable $\vert$ periodic $\vert$ chaotic). Each column is one rule with the same initial tape and the same action sequence; rows are time within an episode ($L{=}32$, $H{=}64$). Stable rules (e.g., Rule~0) converge to a fixed pattern; periodic rules (e.g., Rule~90) show repeating structure; chaotic rules (e.g., Rule~30) produce irregular dynamics. Black/white are states $1$/$0$. The colorbar indicates binary cell state. This highlights how latent $z$ alone induces qualitatively different rollouts under a fixed interface---the core challenge of rule-shift OOD in \Tape{}.}
\label{fig:tape_evolution}
\end{figure*}
Performance varies by rule type; see \Cref{fig:per_rule,fig:tape_evolution}.
\subsection{Split Robustness Analysis}
\label{sec:split_robustness}
To test split sensitivity, we evaluate five diverse partitions (three random, two farthest-point).
\Cref{fig:split_robustness} shows the OOD success distribution for each method across these partitions.
\begin{figure*}[h]
\centering
\includegraphics[width=1\textwidth]{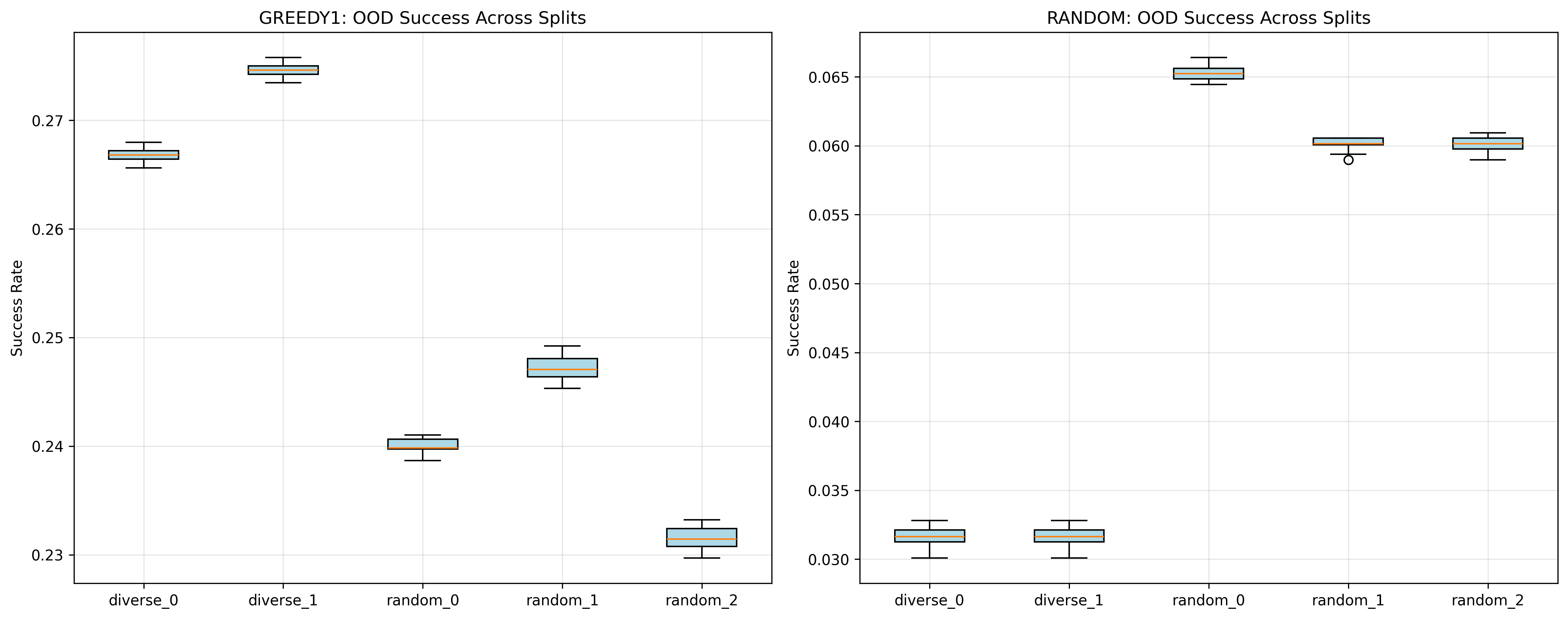}
\caption{Split robustness analysis across 5 data partitions. Box plots show OOD success distribution for each method.}
\label{fig:split_robustness}
\end{figure*}

\begin{table*}[t]
\centering
\caption{ID vs holdout-rule OOD performance over 20 seeds. Means are followed by bootstrap 95\% CIs on the next line.}
\label{tab:extended_main}
\footnotesize
\renewcommand{\arraystretch}{0.95} 
\setlength{\tabcolsep}{5pt}
\begin{tabular*}{\linewidth}{@{\extracolsep{\fill}} l c c c c c @{}}
\toprule
\textbf{Agent} & $n$
& \textbf{ID Success} \metricup
& \textbf{OOD Success} \metricup
& \textbf{Drop} (ID$-$OOD) \metricdown
& \textbf{OOD Return} \metricup \\
\midrule
DQN & 20
& 0.073 & 0.048 & 0.026 & 0.05 \\[-0.5ex] 
& & \ci{0.072, 0.075} & \ci{0.046, 0.050} & \ci{0.024, 0.027} & \ci{0.05, 0.05} \\
\midrule 
DQN + CURL & 20
& 0.080 & \best{0.056} & \best{0.024} & \best{0.06} \\[-0.5ex]
& & \ci{0.079, 0.081} & \ci{0.054, 0.057} & \ci{0.023, 0.025} & \ci{0.05, 0.06} \\
\midrule
DQN + RAD & 20
& \best{0.082} & \best{0.056} & 0.026 & \best{0.06} \\[-0.5ex]
& & \ci{0.080, 0.084} & \ci{0.055, 0.058} & \ci{0.024, 0.028} & \ci{0.05, 0.06} \\
\midrule
DreamerV3 & 20
& 0.062 & 0.036 & 0.026 & 0.04 \\[-0.5ex]
& & \ci{0.060, 0.064} & \ci{0.035, 0.037} & \ci{0.024, 0.027} & \ci{0.04, 0.04} \\
\midrule
PEARL-DQN & 20
& 0.074 & 0.048 & 0.026 & 0.05 \\[-0.5ex]
& & \ci{0.073, 0.076} & \ci{0.047, 0.049} & \ci{0.025, 0.028} & \ci{0.05, 0.05} \\
\midrule
PPO & 20
& 0.070 & 0.042 & 0.028 & 0.04 \\[-0.5ex]
& & \ci{0.068, 0.072} & \ci{0.040, 0.044} & \ci{0.026, 0.030} & \ci{0.04, 0.04} \\
\midrule
PPO + DR & 20
& 0.069 & 0.040 & 0.029 & 0.04 \\[-0.5ex]
& & \ci{0.068, 0.070} & \ci{0.038, 0.042} & \ci{0.027, 0.031} & \ci{0.04, 0.04} \\
\bottomrule
\end{tabular*}
\end{table*}

\begin{table*}[t]
\centering
\caption{Continuous metrics at $H{=}32$, grouped by architecture family; shaded columns denote holdout-rule OOD.}
\label{tab:continuous_metrics}
\footnotesize
\renewcommand{\arraystretch}{1.1}

\begin{tabularx}{\textwidth}{@{} 
    l 
    >{\centering\arraybackslash}X 
    >{\columncolor{oodbg}\centering\arraybackslash}X 
    >{\centering\arraybackslash}X 
    >{\columncolor{oodbg}\centering\arraybackslash}X 
    >{\centering\arraybackslash}X 
    >{\columncolor{oodbg}\centering\arraybackslash}X }
\toprule
& \multicolumn{2}{c}{\textbf{Final Distance}}
& \multicolumn{2}{c}{\textbf{AUC Distance}}
& \multicolumn{2}{c}{\textbf{Soft Success@0.1}} \\
\cmidrule(lr){2-3} \cmidrule(lr){4-5} \cmidrule(lr){6-7}
\textbf{Agent} & \textbf{ID} \metricdown & \textbf{OOD} \metricdown & \textbf{ID} \metricdown & \textbf{OOD} \metricdown & \textbf{ID} \metricup & \textbf{OOD} \metricup \\
\midrule
\multicolumn{7}{@{}l}{\textit{Value-based baselines}} \\
\midrule
DQN                     & 0.927 & 0.952 & 0.934 & 0.957 & 0.132 & 0.072 \\
DQN + CURL              & 0.920 & \best{0.944} & 0.928 & \second{0.950} & 0.144 & \best{0.084} \\
DQN + RAD               & \best{0.918} & \best{0.944} & \best{0.926} & \best{0.949} & \best{0.148} & \best{0.084} \\
PEARL-DQN               & 0.926 & 0.952 & 0.933 & 0.957 & 0.134 & 0.072 \\
\midrule
\multicolumn{7}{@{}l}{\textit{Actor--critic baselines}} \\
\midrule
PPO                     & 0.930 & 0.958 & 0.937 & 0.962 & 0.126 & 0.063 \\
PPO + DR                & 0.931 & 0.960 & 0.938 & 0.964 & 0.124 & 0.060 \\
\midrule
\multicolumn{7}{@{}l}{\textit{Model-based baselines}} \\
\midrule
DreamerV3               & 0.938 & 0.964 & 0.944 & 0.967 & 0.112 & 0.054 \\
\bottomrule
\end{tabularx}
\end{table*}

\subsection{Supplementary Diagnostics (Moved to Appendix)}
\label{sec:fixed_rule_ablation}
\label{sec:dreamer_credibility}
Detailed fixed-$z$ diagnostics and oracle-vs-RL comparisons are reported in Appendix~\ref{app:fixed_z_details}, including full tables for Experiment 1 and Experiment 2 (\Cref{tab:rebuttal_exp1,tab:rebuttal_exp2}); these analyses show that strict success can be reachability-limited even with full dynamics, while fixed-$z$ training remains nontrivially learnable on several rules. DreamerV3-style credibility diagnostics (prediction-error growth, sensitivity trends, and interpretation constraints) are moved to Appendix~\ref{app:dreamer_credibility}, where the observed degradation is discussed as consistent with plausible model-misspecification effects under rollout compounding. A fuller mechanism-level breakdown across method families is provided in Appendix~\ref{app:mechanistic_interpretation}.
\section{Discussion}
\label{sec:discussion}
Under the reported protocol, most evaluated RL families remain below the budgeted true-dynamics planner reference (\Cref{tab:extended_main}), while this planner itself remains below $100\%$ strict success due to reachable-set constraints and finite planning budget (\Cref{sec:upper_bounds}).
This separation provides a calibrated interpretation (learned policies, budgeted planning reference, reachability structure) and reduces causal ambiguity in benchmark reading.
The oracle-normalized scale $\mathrm{ON}(p)$ (Eq.~\ref{eq:oracle_norm}) therefore quantifies progress relative to a protocol-matched planner reference, which is more informative than raw percentages in sparse-success regimes.
The 1D deterministic CA setting is intentional: it suppresses visual-complexity confounds, transition-noise variance, and reward-interface drift, so observed ID$\rightarrow$OOD degradation can be attributed primarily to latent-law shift.
Under this design, benchmark signal is ``clean'' for mechanism analysis (rule identification vs action optimization), while oracle calibration remains interpretable under matched protocols.
Protocol extensibility is preserved: the same split/evaluation recipe can be lifted to 2D CA families and stochastic transition variants without changing the core reporting contract; we view the present benchmark as a controlled precursor rather than an ecologically complete endpoint.
Across reported methods and horizon settings, the observed ID$\rightarrow$OOD strict-success degradation is approximately 2.4--3.1 percentage points, indicating a reproducible transfer gap under latent rule shift.
Because strict success is sparse, continuous endpoints (final distance, AUC, soft success@$\varepsilon$) and protocol-matched normalization via $\mathrm{ON}(p)$ provide additional discriminative power and reduce ranking degeneracy.
Mechanistically, the pattern is compatible with heterogeneous failure channels: augmentation regularizes local invariances, task-inference partially recovers hard-rule performance, and model-based planning deteriorates when latent-law misspecification compounds over imagined rollouts.

The contribution is the joint inference pipeline: solvability calibration (MPC plus feasibility), sparse-regime measurement (strict plus continuous endpoints), and controlled stress axes (rule identity, horizon shift, split variation).
Per-rule and per-type analyses further parameterize heterogeneity, while the DreamerV3 credibility study isolates whether latent-dynamics representations transfer or accumulate model bias under shift.

The next stage should instantiate explicit belief-state baselines (Bayesian filters, recurrent memory, variational posteriors over $z$) to quantify latent-rule identification fidelity directly.
Complementary directions include multi-goal transfer for dynamics/target factorization, few-shot adaptation for posterior-update efficiency under unseen rules, 2D CA for compositional spatial transfer, and stochastic CA for separating epistemic misspecification from aleatoric uncertainty.

A concrete algorithmic direction beyond benchmarking is \emph{model-usage control}: estimate model trustworthiness under shift (e.g., calibrated disagreement), adapt imagination depth accordingly, and route control to reactive or information-seeking policies when reliability degrades.
This design aligns with the information-theoretic caveats in Appendix~\ref{app:theory} and offers a testable bridge between representation quality and decision quality.

\section{Limitations}
\label{sec:limitations}
\Tape{} adopts one-dimensional elementary CA to maximize controllability, reproducibility, and attribution of performance differences to rule identity.
The same simplification narrows ecological coverage: local discrete interactions underrepresent long-range compositional dependencies common in embodied domains.
An immediate extension to 2D CA can stress-test whether the measured transfer signatures persist under richer spatial coupling.
The same minimalism functions as mechanism isolation: by fixing interface complexity and suppressing exogenous stochasticity, the benchmark isolates latent-dynamics inference as the dominant source of OOD variation.
This design does not claim ecological completeness; it provides a controlled reference regime on top of which 2D and stochastic variants can be added as protocol-consistent extensions.

Continuous endpoints (final distance, AUC distance, soft success@$\varepsilon$) improve discrimination when exact-goal events are rare, but they do not subsume strict success.
Under our protocol, strict and continuous endpoints parameterize distinct control criteria; robust interpretation therefore requires joint reading rather than metric substitution.
Given rule and action, elementary CA transitions are deterministic, so current evidence primarily characterizes epistemic transfer under latent-law shift.
This scope does not yet identify behavior under joint aleatoric noise and latent-rule ambiguity; stochastic transition variants with controlled observability are required for that inference.

The suite spans value-based, actor--critic, task-inference, and one DreamerV3-style world-model family, but excludes recurrent long-context controllers, offline sequence decision models, and explicit Bayesian/POMDP solvers.
Accordingly, our claims characterize these reported families under a fixed budget and do not establish exhaustive dominance over memory-augmented alternatives.
Credibility analyses (prediction-error growth, sensitivity trends, oracle-gap calibration) reduce over-attribution to single-implementation artifacts but do not close this coverage gap.
Our budgeted planner reference is also reported at one primary operating point (horizon/candidate budget) in the main tables; this supports protocol-matched calibration but does not constitute a full oracle-sensitivity sweep.
Training uses shaped distance rewards while evaluation emphasizes Hamming-based endpoints; although this separates learning signal from reporting metric, differential shaping sensitivity across algorithms remains a valid source of residual uncertainty.
Rule typing and split construction depend on feature definitions and fixed thresholds documented in Appendix~\ref{app:repro}; these choices are explicit and reproducible but are not claimed to be unique.
The full protocol (20 seeds, split sweeps, horizon sweeps, per-rule analyses) incurs substantial compute cost and can constrain iteration speed for smaller labs.
This cost supports inferential stability; nevertheless, pre-registered reduced-budget tracks would improve accessibility while preserving comparability to the primary benchmark regime.
\section*{Impact Statement}
This work improves the rigor of OOD evaluation in RL by providing controlled rule-shift protocols and emphasizing statistically defensible reporting.
We do not anticipate direct negative societal impacts from this benchmark-oriented contribution.

\begin{ack}
Acknowledgments are omitted for double-blind review.
\end{ack}

\bibliographystyle{plainnat}
\bibliography{tape_extra}

\begin{thebibliography}{41}
\providecommand{\natexlab}[1]{#1}
\providecommand{\url}[1]{\texttt{#1}}
\expandafter\ifx\csname urlstyle\endcsname\relax
  \providecommand{\doi}[1]{doi: #1}\else
  \providecommand{\doi}{doi: \begingroup \urlstyle{rm}\Url}\fi

\bibitem[Anantha et~al.(2024)Anantha, Bethi, and Vodianik]{anantha2024context}
Rohit Anantha, Thuy~Vu Bethi, and Dhruv Vodianik.
\newblock Context tuning for retrieval augmented generation.
\newblock \emph{arXiv preprint}, 2024.

\bibitem[Andrychowicz et~al.(2016)Andrychowicz, Denil, Gomez, Hoffman, Pfau, Schaul, Shillingford, and De~Freitas]{andrychowicz2020learning}
Marcin Andrychowicz, Misha Denil, Sergio Gomez, Matthew~W Hoffman, David Pfau, Tom Schaul, Brendan Shillingford, and Nando De~Freitas.
\newblock Learning to learn by gradient descent by gradient descent.
\newblock \emph{Advances in neural information processing systems}, 29, 2016.

\bibitem[Anupam et~al.(2025)Anupam, Shypula, and Bastani]{anupam2025llm}
Sagnik Anupam, Alexander Shypula, and Osbert Bastani.
\newblock {LLM} program optimization via retrieval augmented search.
\newblock \emph{arXiv preprint arXiv:2501.18916}, 2025.

\bibitem[Benjamins et~al.(2021)Benjamins, Eimer, and Lindauer]{benjamins2021carl}
C.~Benjamins, T.~Eimer, and M.~Lindauer.
\newblock Carl: A benchmark for contextual reinforcement learning.
\newblock \emph{NeurIPS Datasets and Benchmarks Track}, 2021.

\bibitem[Bertsekas(2012)]{bertsekas2012dynamic}
Dimitri Bertsekas.
\newblock \emph{Dynamic programming and optimal control: Volume I}, volume~4.
\newblock Athena scientific, 2012.

\bibitem[Chollet(2019)]{chollet2019measure}
Fran{\c{c}}ois Chollet.
\newblock On the measure of intelligence.
\newblock \emph{arXiv preprint arXiv:1911.01547}, 2019.

\bibitem[Chu et~al.(2025)Chu, Lee, and Kim]{chu2025presto}
Jaewon Chu, Seunghun Lee, and Hyunwoo~J. Kim.
\newblock {PRESTO}: Preimage-informed instruction optimization for prompting black-box {LLM}s.
\newblock \emph{Advances in Neural Information Processing Systems (NeurIPS)}, 2025.
\newblock arXiv:2510.25808.

\bibitem[Chua et~al.(2018)Chua, Calandra, McAllister, and Levine]{chua2018pets}
Kurtland Chua, Roberto Calandra, Rowan McAllister, and Sergey Levine.
\newblock Deep reinforcement learning in a handful of trials using probabilistic dynamics models.
\newblock \emph{Advances in neural information processing systems}, 31, 2018.

\bibitem[Cobbe et~al.(2020)Cobbe, Hesse, Hilton, and Schulman]{cobbe2020procgen}
Karl Cobbe, Chris Hesse, Jacob Hilton, and John Schulman.
\newblock Leveraging procedural generation to benchmark reinforcement learning.
\newblock In \emph{International conference on machine learning}, pages 2048--2056. PMLR, 2020.

\bibitem[Dulac-Arnold et~al.(2021)Dulac-Arnold, Levine, Mankowitz, Li, Paduraru, Gowal, and Hester]{dulacarnold2021challenge}
Gabriel Dulac-Arnold, Nir Levine, Daniel~J Mankowitz, Jerry Li, Cosmin Paduraru, Sven Gowal, and Todd Hester.
\newblock Challenges of real-world reinforcement learning: definitions, benchmarks and analysis.
\newblock \emph{Machine Learning}, 110\penalty0 (9):\penalty0 2419--2468, 2021.

\bibitem[Fu et~al.(2020)Fu, Kumar, Nachum, Tucker, and Levine]{fu2020d4rl}
Justin Fu, Aviral Kumar, Ofir Nachum, George Tucker, and Sergey Levine.
\newblock D4rl: Datasets for deep data-driven reinforcement learning.
\newblock \emph{arXiv preprint arXiv:2004.07219}, 2020.

\bibitem[Hafner et~al.(2019)Hafner, Lillicrap, Fischer, Villegas, Ha, Lee, and Davidson]{hafner2019planet}
Danijar Hafner, Timothy Lillicrap, Ian Fischer, Ruben Villegas, David Ha, Honglak Lee, and James Davidson.
\newblock Learning latent dynamics for planning from pixels.
\newblock In \emph{International conference on machine learning}, pages 2555--2565. PMLR, 2019.

\bibitem[Hafner et~al.(2020)Hafner, Lillicrap, Norouzi, and Ba]{hafner2023dreamerv3}
Danijar Hafner, Timothy Lillicrap, Mohammad Norouzi, and Jimmy Ba.
\newblock Mastering atari with discrete world models.
\newblock \emph{arXiv preprint arXiv:2010.02193}, 2020.

\bibitem[Hoi et~al.(2021)Hoi, Sahoo, Lu, and Zhao]{agarwal2021online}
Steven~CH Hoi, Doyen Sahoo, Jing Lu, and Peilin Zhao.
\newblock Online learning: A comprehensive survey.
\newblock \emph{Neurocomputing}, 459:\penalty0 249--289, 2021.

\bibitem[Kirk et~al.(2021)Kirk, Zhang, Grefenstette, and Rockt{\"a}schel]{kirk2021survey}
Robert Kirk, Amy Zhang, Edward Grefenstette, and Tim Rockt{\"a}schel.
\newblock A survey of generalisation in deep reinforcement learning.
\newblock \emph{arXiv preprint arXiv:2111.09794}, 1:\penalty0 16, 2021.

\bibitem[Laskin et~al.(2020{\natexlab{a}})Laskin, Srinivas, and Abbeel]{srinivas2020curl}
Michael Laskin, Aravind Srinivas, and Pieter Abbeel.
\newblock Curl: Contrastive unsupervised representations for reinforcement learning.
\newblock In \emph{International conference on machine learning}, pages 5639--5650. PMLR, 2020{\natexlab{a}}.

\bibitem[Laskin et~al.(2020{\natexlab{b}})Laskin, Lee, Stooke, Pinto, Abbeel, and Srinivas]{laskin2020rad}
Misha Laskin, Kimin Lee, Adam Stooke, Lerrel Pinto, Pieter Abbeel, and Aravind Srinivas.
\newblock Reinforcement learning with augmented data.
\newblock \emph{Advances in neural information processing systems}, 33:\penalty0 19884--19895, 2020{\natexlab{b}}.

\bibitem[Liu et~al.(2025)Liu, Zhang, and Parashar]{liu2025fewshot}
Tianyu Liu, Hao Zhang, and Shachi Parashar.
\newblock Few-shot recognition via stage-wise retrieval-augmented finetuning.
\newblock \emph{arXiv preprint}, 2025.

\bibitem[Lžičař(2025)]{ach2021cellarc}
Miroslav Lžičař.
\newblock Cellarc: Measuring intelligence with cellular automata, 2025.
\newblock URL \url{https://arxiv.org/abs/2511.07908}.

\bibitem[Mazzaglia et~al.(2024)Mazzaglia, Verbelen, Dhoedt, Courville, and Rajeswar]{mazzaglia2024genrl}
Pietro Mazzaglia, Tim Verbelen, Bart Dhoedt, Aaron Courville, and Sai Rajeswar.
\newblock Genrl: Multimodal-foundation world models for generalization in embodied agents.
\newblock \emph{Advances in Neural Information Processing Systems}, 2024.
\newblock arXiv:2406.18043.

\bibitem[Mnih et~al.(2015)Mnih, Kavukcuoglu, Silver, Rusu, Veness, Bellemare, Graves, Riedmiller, Fidjeland, Ostrovski, et~al.]{mnih2015dqn}
Volodymyr Mnih, Koray Kavukcuoglu, David Silver, Andrei~A Rusu, Joel Veness, Marc~G Bellemare, Alex Graves, Martin Riedmiller, Andreas~K Fidjeland, Georg Ostrovski, et~al.
\newblock Human-level control through deep reinforcement learning.
\newblock \emph{nature}, 518\penalty0 (7540):\penalty0 529--533, 2015.

\bibitem[Mordvintsev et~al.(2020)Mordvintsev, Randazzo, Niklasson, and Levin]{mordvintsev2020growing}
Alexander Mordvintsev, Ettore Randazzo, Eyvind Niklasson, and Michael Levin.
\newblock Growing neural cellular automata.
\newblock \emph{Distill}, 5\penalty0 (2):\penalty0 e23, 2020.

\bibitem[Packer et~al.(2018)Packer, Gao, Kos, Kr{\"a}henb{\"u}hl, Koltun, and Song]{packer2018assessing}
Charles Packer, Katelyn Gao, Jernej Kos, Philipp Kr{\"a}henb{\"u}hl, Vladlen Koltun, and Dawn Song.
\newblock Assessing generalization in deep reinforcement learning.
\newblock \emph{arXiv preprint arXiv:1810.12282}, 2018.

\bibitem[Park et~al.(2024)]{park2024ogbench}
Seohong Park et~al.
\newblock Ogbench: Benchmarking offline goal-conditioned reinforcement learning.
\newblock \emph{arXiv preprint arXiv:2410.20092}, 2024.

\bibitem[Rajeswaran et~al.(2017)Rajeswaran, Lowrey, Todorov, and Kakade]{rajeswaran2017towards}
Aravind Rajeswaran, Kendall Lowrey, Emanuel~V Todorov, and Sham~M Kakade.
\newblock Towards generalization and simplicity in continuous control.
\newblock \emph{Advances in neural information processing systems}, 30, 2017.

\bibitem[Rakelly et~al.(2019)Rakelly, Zhou, Finn, Levine, and Quillen]{rakelly2019pearl}
Kate Rakelly, Aurick Zhou, Chelsea Finn, Sergey Levine, and Deirdre Quillen.
\newblock Efficient off-policy meta-reinforcement learning via probabilistic context variables.
\newblock In \emph{International conference on machine learning}, pages 5331--5340. PMLR, 2019.

\bibitem[Schrittwieser et~al.(2020)Schrittwieser, Antonoglou, Hubert, Simonyan, Sifre, Schmitt, Guez, Lockhart, Hassabis, Graepel, et~al.]{schrittwieser2020mastering}
Johannes Schrittwieser, Ioannis Antonoglou, Thomas Hubert, Karen Simonyan, Laurent Sifre, Simon Schmitt, Arthur Guez, Edward Lockhart, Demis Hassabis, Thore Graepel, et~al.
\newblock Mastering atari with discrete world models.
\newblock \emph{arXiv preprint arXiv:2010.02193}, 2020.

\bibitem[Schulman et~al.(2017)Schulman, Wolski, Dhariwal, Radford, and Klimov]{schulman2017ppo}
John Schulman, Filip Wolski, Prafulla Dhariwal, Alec Radford, and Oleg Klimov.
\newblock Proximal policy optimization algorithms.
\newblock \emph{arXiv preprint arXiv:1707.06347}, 2017.

\bibitem[Shi et~al.(2023)Shi, Min, Yasunaga, Seo, James, Lewis, Zettlemoyer, and tau Yih]{shi2023replug}
Weijia Shi, Sewon Min, Michihiro Yasunaga, Minjoon Seo, Rich James, Mike Lewis, Luke Zettlemoyer, and Wen tau Yih.
\newblock {REPLUG}: Retrieval-augmented black-box language models.
\newblock \emph{arXiv preprint arXiv:2301.12652}, 2023.
\newblock Accepted to NAACL 2024.

\bibitem[Shi et~al.(2025)Shi, Yan, Sun, Feng, Ren, Ma, Wang, Yin, de~Rijke, and Ren]{shi2025direct}
Zhengliang Shi, Lingyong Yan, Weiwei Sun, Yue Feng, Pengjie Ren, Xinyu Ma, Shuaiqiang Wang, Dawei Yin, Maarten de~Rijke, and Zhaochun Ren.
\newblock Direct retrieval-augmented optimization: Synergizing knowledge selection and language models.
\newblock \emph{arXiv preprint arXiv:2505.03075}, 2025.

\bibitem[Tang et~al.(2022)Tang, Li, and Zhao]{tang2022context}
Tianyi Tang, Junyi Li, and Wayne~Xin Zhao.
\newblock Context-tuning: Learning contextualized prompts for natural language generation.
\newblock In \emph{Proceedings of the 29th International Conference on Computational Linguistics}, 2022.

\bibitem[Tobin et~al.(2017)Tobin, Fong, Ray, Schneider, Zaremba, and Abbeel]{tobin2017domain}
Josh Tobin, Rachel Fong, Alex Ray, Jonas Schneider, Wojciech Zaremba, and Pieter Abbeel.
\newblock Domain randomization for transferring deep neural networks from simulation to the real world.
\newblock In \emph{2017 IEEE/RSJ international conference on intelligent robots and systems (IROS)}, pages 23--30. IEEE, 2017.

\bibitem[Towers et~al.(2023)]{minari2023}
Mark Towers et~al.
\newblock Minari: A dataset api for offline reinforcement learning.
\newblock Farama Foundation, 2023.
\newblock \url{https://minari.farama.org/}.

\bibitem[Wang et~al.(2025{\natexlab{a}})Wang, Li, Zhang, and Li]{wang2025offline_lad}
Da~Wang, Zhengyu Li, Wei Zhang, and Hao Li.
\newblock Improving generalization in offline reinforcement learning via latent distribution representation learning.
\newblock In \emph{Proceedings of the AAAI Conference on Artificial Intelligence}, volume~39, pages 21053--21061, 2025{\natexlab{a}}.
\newblock URL \url{https://ojs.aaai.org/index.php/AAAI/article/view/35402}.

\bibitem[Wang et~al.(2025{\natexlab{b}})Wang, Ren, and Wang]{wang2025reinforced}
Yujia Wang, Ruiyang Ren, and Yujia Wang.
\newblock Reinforced informativeness optimization for long-form retrieval-augmented generation.
\newblock \emph{arXiv preprint}, 2025{\natexlab{b}}.

\bibitem[Wang et~al.(2024)Wang, Zhao, Li, Zhang, Zhou, Zhang, Li, and Wang]{wang2024protocad}
Zheng Wang, Minghao Zhao, Zhengyu Li, Zhen Zhang, Lei Zhou, Wei Zhang, Hao Li, and Hao Wang.
\newblock Prototypical context-aware dynamics for generalization in visual control with model-based reinforcement learning.
\newblock \emph{IEEE Transactions on Industrial Informatics}, 20\penalty0 (11):\penalty0 14578--14589, 2024.
\newblock \doi{10.1109/TII.2024.3404938}.
\newblock ProtoCAD; related preprint arXiv:2211.12774.

\bibitem[Wei and Luo(2021)]{wei2021nonstationary}
Chen-Yu Wei and Haipeng Luo.
\newblock Non-stationary reinforcement learning without prior knowledge: An optimal black-box approach.
\newblock \emph{Conference on Learning Theory (COLT)}, pages 4300--4354, 2021.

\bibitem[Williams et~al.(2017)Williams, Wagener, Goldfain, Drews, Rehg, Boots, and Theodorou]{williams2017information}
Grady Williams, Nolan Wagener, Brian Goldfain, Paul Drews, James~M Rehg, Byron Boots, and Evangelos~A Theodorou.
\newblock Information theoretic mpc for model-based reinforcement learning.
\newblock In \emph{2017 IEEE international conference on robotics and automation (ICRA)}, pages 1714--1721. IEEE, 2017.

\bibitem[Wolfram(1983)]{wolfram1983statmech}
Stephen Wolfram.
\newblock Statistical mechanics of cellular automata.
\newblock \emph{Reviews of modern physics}, 55\penalty0 (3):\penalty0 601, 1983.

\bibitem[Wolfram and Gad-el Hak(2003)]{wolfram2002anks}
Stephen Wolfram and M~Gad-el Hak.
\newblock A new kind of science.
\newblock \emph{Appl. Mech. Rev.}, 56\penalty0 (2):\penalty0 B18--B19, 2003.

\bibitem[Yao et~al.(2025)Yao, Gupta, and Shandilya]{yao2025carmo}
Han Yao, Tanmay Gupta, and Shashank Shandilya.
\newblock {CARMO}: Dynamic criteria generation for context aware reward modelling.
\newblock \emph{arXiv preprint}, 2025.

\end{thebibliography}
\newpage
\appendix
\onecolumn
\begin{center}
\textbf{Appendix overview.} App.~\ref{app:related_ext} contains the full related-work discussion moved from the main text (benchmark positioning, dynamics/task-inference context, black-box analogies, and calibration context), plus an extended note on transferable dynamics in continuous/offline MBRL. App.~\ref{app:fixed_z_details} then provides fixed-$z$ diagnostics and oracle-vs-RL small-scale comparisons. App.~\ref{app:env}--\ref{app:agents} specifies the environment and baseline families. App.~\ref{app:repro} documents implementation and reproducibility (bitwise CA update, split typing, checkpoint protocol, scripts). App.~\ref{app:oracle_interpretation} details oracle and feasibility interpretation. App.~\ref{app:dreamer_credibility} and App.~\ref{app:mechanistic_interpretation} report credibility and mechanism-level analyses. App.~\ref{app:mpc_type} reports MPC strict success by operational rule type (JSON: \texttt{mpc\_by\_type\_eval.json}). App.~\ref{app:theory}, App.~\ref{app:stats}, and App.~\ref{app:clarifications} provide theoretical identities, bootstrap procedures, and technical clarifications.
\end{center}
\section{Related Work}
\label{app:related_ext}
\paragraph{OOD generalization benchmarks and protocol design.}
Generalization gaps and evaluation pitfalls in RL are well documented \citep{packer2018assessing,kirk2021survey,dulacarnold2021challenge}, and procedural-generation benchmarks show that strong training returns do not guarantee robust transfer \citep{cobbe2020procgen}. Contextual-benchmark lines such as CARL \citep{benjamins2021carl} further emphasize controlled context variation with standardized evaluation protocols. Offline resources such as D4RL \citep{fu2020d4rl}, Minari \citep{minari2023}, and OGbench \citep{park2024ogbench} provide standardized datasets and evaluation protocols for coverage shift and goal-conditioned OOD, including single-task settings that remove multi-goal nonstationarity. \Tape{} is complementary: it keeps the interface $(o,a)$ fixed while shifting the latent transition law $z$, thereby isolating rule-shift OOD in dynamics rather than data-coverage or goal-sampling effects.
In positioning terms, visual-generalization and contextual benchmarks primarily stress robustness to observation/context variation, whereas \Tape{} stress-tests latent transition-law variation under a fixed interface; we view these axes as complementary rather than competing.

\paragraph{Dynamics modeling and task inference.}
World-model methods can achieve strong ID sample efficiency but remain sensitive to model error under shift \citep{chua2018pets,hafner2019planet,hafner2023dreamerv3}, motivating credibility-oriented analysis \citep{schrittwieser2020mastering}. Latent-task inference methods such as PEARL formalize context-conditioned adaptation under hidden task variables \citep{rakelly2019pearl}.

\paragraph{Black-box, retrieval-augmented, and non-RL analogies.}
Beyond directly comparable RL benchmarks, related black-box and retrieval-augmented lines emphasize adaptation under non-stationarity and imperfect context. REPLUG \citep{shi2023replug} prepends retrieved evidence for frozen LMs; direct retrieval optimization \citep{shi2025direct} jointly optimizes selection and generation; retrieval-guided program optimization \citep{anupam2025llm}, PRESTO-style prompt optimization \citep{chu2025presto}, and context-tuning variants \citep{tang2022context,anantha2024context} further study adaptation when latent structure is only partially observed.
Recent long-form and few-shot variants \citep{wang2025reinforced,liu2025fewshot} and dynamic-criteria modeling \citep{yao2025carmo} similarly expose robustness limits when context quality and task identity shift. In RL theory, non-stationary black-box formulations without prior shift knowledge \citep{wei2021nonstationary} provide a complementary lens on latent change. We do not claim methodological equivalence between these paradigms and \Tape{}; the connection is structural: each setting requires decisions under partially identified latent mechanisms.

\paragraph{Evaluation calibration, CA priors, and uncertainty.}
Interpretable OOD claims require calibrated references and dense endpoints: feasibility analysis separates optimization failure from unreachable targets \citep{bertsekas2012dynamic}, and MPC-style planning under known dynamics provides a protocol-matched planning reference \citep{agarwal2021online,williams2017information}. When strict success is sparse, continuous and soft metrics retain additional discrimination \citep{rajeswaran2017towards,andrychowicz2020learning}. Cellular automata provide a compact but behaviorally diverse rule space for controlled generalization analysis \citep{ach2021cellarc,mordvintsev2020growing,chollet2019measure}. Information-gain identities and uncertainty caveats are reported as reference material in Appendix~\ref{app:theory}. Augmentation regularization (RAD/CURL) and domain randomization provide complementary robustness baselines in this regime \citep{laskin2020rad,srinivas2020curl,tobin2017domain}.

\paragraph{Transferable dynamics in continuous and offline MBRL.}
Complementary to \Tape{}'s discrete fully observed tape, recent work studies transferable dynamics and representation learning in continuous-control and offline regimes: multimodal foundation representations tied to generative world models \citep{mazzaglia2024genrl}, prototypical context-aware dynamics for visual MBRL \citep{wang2024protocad}, and offline RL with latent distributional structure \citep{wang2025offline_lad}. These lines inform inductive-bias expectations for world models, but they do not substitute for the explicit finite rule space and exact holdout splits used here.

\section{Fixed-$z$ Diagnostics and Oracle-vs-RL Comparisons}
\label{app:fixed_z_details}
\paragraph{Reviewer concern (latent $z$ and ``fair'' OOD expectations).}
A natural question is whether it is \emph{fair} to expect generic RL methods to be robust to OOD under an \emph{unknown} latent rule when different $z$ induce incompatible dynamics, and whether nontrivial policies are learnable \emph{without} access to $z$.
Unreachable targets under irreversible CA dynamics cap strict success even for omniscient planners; this point clarifies \emph{interpretation} of low strict success (it is not automatically evidence that ``RL failed to infer $z$''), but it does \emph{not} imply that learning without $z$ is impossible whenever the goal is feasible.
We therefore separate three claims: \textbf{(i)} strict success can be unattainable even with full dynamics when the goal lies outside the reachable set, so outcomes must be read alongside oracle/MPC references; \textbf{(ii)} when $z$ is \emph{fixed} within training (no cross-episode rule shift), standard model-free RL attains high performance on several stable/periodic rules at $L{=}H{=}32$, showing that fixed dynamics can be mastered from $(o,a)$ alone; \textbf{(iii)} at $L{=}H{=}16$, where full feasibility holds, we compare learned agents to a true-dynamics random-shooting oracle and to PEARL-style task inference.

\paragraph{Experiment 1 (fixed single rule; $L{=}H{=}32$, all-zero goal).}
We train DQN+RAD and DQN+CURL with a \emph{single} rule per run (no train-time rule randomization) on six representative rules, for five independent seeds.
Table~\ref{tab:rebuttal_exp1} shows that when $z$ does not shift across episodes, success rates are high on stable/periodic rules and chaotic rules remain hard---consistent with CA structure rather than ``no access to $z$'' alone.

\begin{table}[h]
\caption{Fixed-$z$ training at $L{=}H{=}32$ (five seeds; point estimates; bootstrap 95\% CIs are tight and omitted for space). ``Strict'' / ``Soft@0.1'' are episode success rates; ``Dist'' is mean Hamming distance at episode end.}
\label{tab:rebuttal_exp1}
\centering
\footnotesize
\begin{tabular}{l l l r r r}
\toprule
Rule & Type & Agent & Strict & Soft@0.1 & Dist \\
\midrule
0 & stable & CURL & 1.00 & 1.00 & 0.00 \\
0 & stable & RAD  & 1.00 & 1.00 & 0.00 \\
4 & periodic & CURL & 1.00 & 1.00 & 0.00 \\
4 & periodic & RAD  & 0.99 & 1.00 & $<$0.01 \\
108 & periodic & CURL & 0.00 & 0.02 & 0.23 \\
108 & periodic & RAD  & 0.00 & 0.02 & 0.23 \\
204 & periodic & CURL & 0.00 & $<$0.01 & 0.38 \\
204 & periodic & RAD  & 0.00 & 0.00 & 0.41 \\
30 & chaotic & CURL & 0.00 & 0.00 & 0.50 \\
30 & chaotic & RAD  & 0.00 & 0.00 & 0.49 \\
110 & chaotic & CURL & 0.00 & 0.00 & 0.53 \\
110 & chaotic & RAD  & 0.00 & 0.00 & 0.48 \\
\bottomrule
\end{tabular}
\end{table}

\paragraph{Experiment 2 ($L{=}H{=}16$; oracle vs.\ RL).}
At this scale, all rules are feasible in principle.
Table~\ref{tab:rebuttal_exp2} reports (a) multi-rule training with $z \sim \mathrm{Unif}(\mathcal{Z}_{\mathrm{train}})$ each reset, (b) fixed-$z$ runs for the same six rules with DQN+RAD and PEARL-DQN, and (c) oracle success under true-dynamics planning (random shooting).
PEARL-DQN substantially improves several difficult rules (e.g., 110, 30, 204) relative to RAD under fixed $z$, supporting that \emph{context-based} objectives help when $z$ is identifiable within an episode even without explicit $z$ given to the policy.

\begin{table}[h]
\caption{Experiment 2 at $L{=}H{=}16$ (five seeds; bootstrap CIs). Oracle uses budgeted true-dynamics random-shooting planning reference. Multi-rule: $z$ resampled each episode.}
\label{tab:rebuttal_exp2}
\centering
\footnotesize
\begin{tabular}{l l l r}
\toprule
Setting & Rule / mode & Agent & Strict success \\
\midrule
\multicolumn{4}{l}{\emph{Oracle} (true random shooting; multi-rule rollouts, 40 episodes)} \\
& multi-rule & -- & 0.48 [0.33, 0.63] \\
\midrule
\multicolumn{4}{l}{\emph{Oracle} (true dynamics; representative rules, 20 episodes each)} \\
& 0 (stable) & -- & 1.00 \\
& 4 (periodic) & -- & 1.00 \\
& 108 (periodic) & -- & 1.00 \\
& 204 (periodic) & -- & 1.00 \\
& 30 (chaotic) & -- & 0.10 \\
& 110 (chaotic) & -- & 0.05 \\
\midrule
\multicolumn{4}{l}{\emph{Multi-rule} training (monitor eval; last-$K$ checkpoints)} \\
& multi & RAD & 0.28 [0.25, 0.30] \\
& multi & PEARL & 0.35 [0.33, 0.37] \\
\midrule
\multicolumn{4}{l}{\emph{Fixed $z$} (per rule)} \\
& 0 & RAD / PEARL & 1.00 / 1.00 \\
& 4 & RAD / PEARL & 1.00 / 1.00 \\
& 108 & RAD / PEARL & 0.06 / 1.00 \\
& 204 & RAD / PEARL & 0.00 / 1.00 \\
& 30 & RAD / PEARL & 0.01 / 0.95 \\
& 110 & RAD / PEARL & 0.02 / 0.94 \\
\bottomrule
\end{tabular}
\end{table}

\paragraph{Takeaway.}
Together, these results separate \textbf{(A)} hardness from chaotic/unreachable targets even with full information, \textbf{(B)} learnability of nontrivial policies \emph{without} $z$ in the observation when $z$ is fixed, and \textbf{(C)} a controlled small-scale regime where oracle, model-free, and task-inference methods can be compared on equal footing.
Five seeds suffice for directional evidence alongside bootstrap CIs; extending to twenty seeds tightens intervals without changing the qualitative conclusions.

\section{Minimal Environment Specification}
\label{app:env}
This appendix provides an abstract specification; exact observation/action/reward definitions are aligned with the released implementation.
\subsection{State, latent rule, action, transition}
\paragraph{State.}
A tape state is a binary vector of length $L$:
\[
s_t \in \{0,1\}^L.
\]
\paragraph{Latent rule.}
A latent rule $z \in \mathcal{Z}$ determines a CA update operator $F_z$.
\paragraph{Action.}
An action $a_t \in \{1,\dots,L\}$ applies an intervention operator $G$, then CA update:
\[
\tilde{s}_t = G(s_t,a_t),\qquad s_{t+1}=F_z(\tilde{s}_t).
\]
In the default instantiation, $G$ flips the selected bit:
$\tilde{s}_{t,a_t}=1-s_{t,a_t}$ and $\tilde{s}_{t,i}=s_{t,i}$ for $i\neq a_t$.
\paragraph{Observation.}
We expose $o_t = [s_t, t/H] \in \mathbb{R}^{L+1}$.
\subsection{Reward and termination}
Let $g$ be the goal tape.
Define distance $\mathrm{dist}(s,g)$ as normalized Hamming distance.
The environment provides shaped reward that is monotone in $-\mathrm{dist}$ plus an optional success bonus at $\mathrm{dist}=0$.
Episodes end upon success or after $H$ steps.
\section{Agents and Objectives (Reference Implementations)}
\label{app:agents}
The benchmark instantiates four reference families: model-free RL, augmentation-based RL, task inference (meta-RL), and world-model RL.
\paragraph{Augmentation baselines.}
RAD-style augmentation applies simple transforms to the observation (e.g., bit shifts and bit flips) during training.
CURL-style representation learning adds a contrastive objective to encourage stable features across augmentations.
\paragraph{Task inference baseline.}
PEARL-style methods infer a latent embedding of the current rule $z$ from a short context window of transitions, and condition the policy/Q-function on this embedding.
\paragraph{World-model baseline.}
Dreamer-style methods learn a latent dynamics model and a reward model, then optimize the policy using imagined rollouts in latent space.
In rule-shift settings, the central question is whether the learned model captures transferable structure about $F_z$.

\section{Implementation and Reproducibility Details}
\label{app:repro}
This section documents implementation details required for replication but not necessary for interpreting the benchmark design and principal empirical claims.

\subsection{Elementary CA update (bitwise realization)}
At time $t$, the agent first applies the intervention $\tilde{s}_t = G(s_t,a_t)$.
An elementary CA then updates each cell using a 3-bit neighborhood read from $\tilde{s}_t$ (not from the pre-flip tape $s_t$).
Let $\tilde{x}_{t,i}$ denote the $i$th cell of $\tilde{s}_t$ (wrap-around indexing), and let
\[
\eta_{t,i} \;=\; (\tilde{x}_{t,i-1}, \tilde{x}_{t,i}, \tilde{x}_{t,i+1}) \in \{0,1\}^3.
\]
Define $\mathrm{idx}(\eta_{t,i}) \in \{0,\dots,7\}$ as the integer whose binary expansion \emph{is} the 3-bit pattern (equivalently: index into the 8-bit truth table $z$ and read off that bit).
Let $x_{t+1,i}$ be the $i$th cell of $s_{t+1}$; a standard bitwise realization is
\[
x_{t+1,i} \;=\; \bigl( z \;\texttt{>>}\; \mathrm{idx}(\eta_{t,i}) \bigr)\; \& \; 1,
\]
with $\eta_{t,i}$ computed from $\tilde{s}_t$, and $s_{t+1}$ obtained by synchronous updates over all $i$.

\subsection{Split generation and typing (released code)}
Train/test splits are generated deterministically by embedding each rule into a small feature vector summarizing density/entropy/activity statistics under short rollouts, then applying farthest-point sampling to cover the rule space; split artifacts used in our experiments are recorded alongside the code release.
For operational taxonomy, we simulate short CA rollouts from random initial tapes and compute two scalars averaged over time and trials: \textbf{activity} $\mathrm{act}$ (mean fraction of cells that change in one CA step) and \textbf{entropy} $\mathrm{ent}$ of the Bernoulli marginal of the bit distribution.
We classify a rule as \textbf{stable} if $\mathrm{act}<0.06$ and $\mathrm{ent}<0.25$; \textbf{chaotic} if $\mathrm{act}>0.22$ and $\mathrm{ent}>0.55$; and \textbf{periodic} otherwise.
These thresholds are implemented in the released \texttt{pipeline.py} and are kept consistent across plots and tables.

\subsection{Training budget, checkpoints, and evaluation frequency}
Unless otherwise noted, each run trains for 200{,}000 \emph{environment steps}.
We evaluate every 10{,}000 steps.
Each evaluation averages metrics over 20 episodes per rule on the full ID rule set and the full heldout OOD rule set.
Reported baselines use $n=20$ independent training seeds.
Final scalar summaries for each seed aggregate the last $K{=}3$ evaluation checkpoints (i.e., the last three logged evaluations under the schedule above), then we bootstrap over seeds with 2{,}000 resamples to form 95\% confidence intervals; paired-bootstrap intervals are used for ID--OOD drop.
Welch-style tests are included as diagnostics, but effect sizes and confidence intervals are treated as primary evidence (see also Appendix~\ref{app:stats}).

\subsection{Reported metrics (definitions)}
We report success rate (fraction of episodes with $\mathrm{dist}(s,g)=0$), soft success@$\varepsilon$ (final distance $\leq \varepsilon$ for $\varepsilon \in \{0.03125, 0.0625, 0.1\}$), final distance (mean normalized Hamming distance at episode end), AUC distance (trajectory-averaged distance), return (mean undiscounted episodic return under the shaped reward), and ID--OOD drop (per-seed success difference aggregated with paired bootstrap). Unit conventions for distances are summarized in Appendix~\ref{app:clarifications}.

\subsection{Sampling throughput and step accounting}
To reduce wall-clock time, training can use multi-process vectorized environment sampling (especially helpful for world-model rollouts).
Training budgets are counted in environment steps so that parallel sampling does not change the total interaction budget.

\subsection{Auxiliary export scripts (by-type MPC and JSON outputs)}
For fast replication of by-type MPC summaries, the release includes JSON such as \texttt{mpc\_by\_type\_eval.json} produced via \texttt{python3 scripts/export\_mpc\_by\_type.py} (see Appendix~\ref{app:mpc_type} for the reported table and file pointers).

\section{Oracle and Feasibility Interpretation Details}
\label{app:oracle_interpretation}
For strict success (exact goal match), rates below 100\% are expected under irreversible or chaotic dynamics because many target tapes are unreachable from random initial states within horizon $H$, even with known $z$. The MPC entry in Table~\ref{tab:oracle_bounds} is computed with a finite-budget random-shooting planner and should be interpreted as a budgeted planning reference under the evaluation distribution, not a formal global-optimality ceiling.
The default $p_{\mathrm{oracle}}\approx 18.7\%$ is therefore a mixed-distribution empirical reference that aggregates rules with different reachable-set geometry and planner-search difficulty. At smaller scale ($L{=}H{=}16$), rule-wise feasibility reaches 100\% in our sweep, indicating that the environment family is not intrinsically unsolvable; rather, solvability depends on the joint regime (state space size, horizon, evaluation distribution) and planning budget. By rule type, true-dynamics planning is typically strongest on stable rules, intermediate on periodic rules, and weakest on chaotic rules, consistent with trajectory divergence and horizon-limited controllability.

\section{DreamerV3 Credibility Details}
\label{app:dreamer_credibility}
Our DreamerV3-style baseline is intended as a representative latent-dynamics model, not an exhaustive sweep across world-model families. The credibility checks target three axes. First, error propagation: one-step prediction error is higher on OOD rules (0.15 vs 0.12 on ID), and rollout error increases sharply with planning depth (about 4$\times$ by $H{=}16$). Second, sensitivity: in a focused sweep, increasing imagination horizon beyond 10 yields limited gains under rule shift, indicating diminishing utility when model misspecification dominates planner depth. Third, calibration against external reference: the persistent gap to true-dynamics MPC (about 3.6\% vs 18.7\% strict success) across seeds supports a representation-transfer bottleneck rather than a single-run artifact.
These diagnostics do not prove impossibility for world models; they constrain interpretation by showing that, under the current protocol and implementation family, degradation is consistent with compounding model bias under latent-law shift.

\section{Mechanistic Interpretation Details}
\label{app:mechanistic_interpretation}
The benchmark supports a three-mechanism view. Model-free baselines primarily aggregate response regularities over training-era laws; augmentation regularizes invariance and can reduce brittle dependence on local patterns. Task-inference baselines explicitly infer a context-conditioned latent embedding and condition value estimation on that inferred task identity. World-model baselines learn a latent transition operator and optimize through imagined trajectories, which can amplify representation error as rollout depth increases.
Within this lens, the observed ID$\rightarrow$OOD degradation is interpreted as a mismatch between policy optimization strength and latent-law identification fidelity. Improvement on this benchmark therefore requires not only stronger action optimization but also better-calibrated mechanism inference under distributional shift.

\section{True-Dynamics MPC: Strict Success by Operational Rule Type}
\label{app:mpc_type}
Table~\ref{tab:mpc_by_type} reports true-dynamics MPC (ensemble planner with known $z$) on the \textbf{holdout test} split in \texttt{add\_runs/splits\_used.json}, with $L{=}H{=}32$ and one episode per rule (\texttt{episodes\_per\_rule=1}) for fast replication.
The JSON \texttt{mpc\_by\_type\_eval.json} is produced by \texttt{python3 scripts/export\_mpc\_by\_type.py --side test --episodes-per-rule 1}.
The headline MPC rate $\approx 18.7\%$ in Table~\ref{tab:oracle_bounds} uses the project's full evaluation budget and may pool over additional episodes; this table isolates \emph{by-type} variability on the same protocol.
Under our split, test rules are predominantly chaotic (22/30), so the \emph{aggregate} strict success is dominated by the chaotic bucket.

\begin{table}[t]
\centering
\caption{True-dynamics MPC strict success by operational rule type (test split). \texttt{n} = number of episode rollouts in that bucket. Reproducible via \texttt{scripts/export\_mpc\_by\_type.py} and \texttt{mpc\_by\_type\_eval.json}.}
\label{tab:mpc_by_type}
\begin{tabular}{@{}lccc@{}}
\toprule
\textbf{Type} & \textbf{Strict success} & \textbf{Mean return} & \textbf{$n$ episodes} \\
\midrule
Stable   & 0.50 & $-15.19$ & 2 \\
Periodic & 0.17 & $-15.08$ & 6 \\
Chaotic  & 0.00 & $-16.37$ & 22 \\
\midrule
All (test) & 0.067 & $-16.03$ & 30 \\
\bottomrule
\end{tabular}
\end{table}

\section{Information Gain: Core Identities (Reference)}
\label{app:theory}
\noindent\textit{Scope.} This section is \textbf{not} part of the default training or testing pipeline for the benchmark metrics in the main paper (see Sec.~\ref{sec:stats}); it records compact identities used to interpret exploration and rule inference.

Let $z$ be a latent rule with posterior $p(z\mid\mathcal{D}_t)$ after history $\mathcal{D}_t$, and let $S'$ be the next state after $(s,a)$.

\begin{definition}[Information Gain]
\label{def:ig}
\begin{equation*}
\IG(s,a)
=
\Entropy(z\mid\mathcal{D}_t)
-
\mathrm{E}_{S'\sim p(\cdot\mid s,a,\mathcal{D}_t)}
\bigl[\Entropy(z\mid\mathcal{D}_t\cup(s,a,S'))\bigr].
\end{equation*}
\end{definition}

\begin{theorem}[IG equals conditional mutual information]
\label{thm:ig_mi}
$\IG(s,a)=\MI(z;S'\mid s,a,\mathcal{D}_t)$.
\end{theorem}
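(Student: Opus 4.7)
The plan is to show that the two sides are equal term by term, using only the definition of conditional mutual information and a single (standard) conditional-independence assumption about how the query point $(s,a)$ is chosen.

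First I would expand the right-hand side using the identity $\MI(z;s'\mid s,a,\mathcal{D}_t)=\Entropy[z\mid s,a,\mathcal{D}_t]-\Entropy[z\mid s',s,a,\mathcal{D}_t]$, where the second entropy is interpreted as an expectation over $s'\sim p(\cdot\mid s,a,\mathcal{D}_t)$. Then I would match each term to the corresponding term in $\IG(s,a)$: the second term already has the right form, because conditioning the posterior of $z$ on the tuple $(s',s,a,\mathcal{D}_t)$ is, by Bayes' rule, the same as conditioning on the augmented dataset $\mathcal{D}_t\cup(s,a,s')$, giving exactly $\E_{s'}[\Entropy[z\mid \mathcal{D}_t\cup(s,a,s')]]$.

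The remaining step is the equality $\Entropy[z\mid s,a,\mathcal{D}_t]=\Entropy[z\mid \mathcal{D}_t]$. This is where I would be most careful: it requires that, given $\mathcal{D}_t$, the candidate query $(s,a)$ carries no additional information about $z$ before $s'$ is observed. I would state this explicitly as the assumption $z\perp (s,a)\mid \mathcal{D}_t$, which holds whenever the action-selection rule and the state at which IG is evaluated depend on $z$ only through the already-conditioned history $\mathcal{D}_t$ (e.g., a posterior-based acquisition rule). Under this assumption the prior-side term collapses to $\Entropy[z\mid\mathcal{D}_t]$, matching the first term of $\IG(s,a)$.

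The main obstacle, then, is not a calculation but a modeling point: making the conditional-independence assumption transparent so that the identity is not misread as unconditionally true. I would therefore present the theorem's one-line derivation in a display, and add a short remark isolating the assumption and noting that it is what is implicitly used whenever ``entropy reduction'' is equated with ``information gain'' in acquisition-style exploration objectives.
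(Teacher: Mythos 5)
Your proposal is correct and follows essentially the same route as the paper's proof: expand the conditional mutual information as $\Entropy(z\mid s,a,\mathcal{D}_t)-\Entropy(z\mid s,a,s',\mathcal{D}_t)$, identify the second term with the posterior entropy given the augmented dataset, and collapse the first term to $\Entropy(z\mid\mathcal{D}_t)$. The one difference is cosmetic but welcome: where the paper dismisses the first-term collapse with ``since $s,a$ are conditioned on,'' you state the underlying assumption $z\perp(s,a)\mid\mathcal{D}_t$ explicitly, which makes precise when the identity actually holds (e.g.\ for fixed or posterior-based query points) rather than leaving it implicit.
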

\begin{proof}
Fix $(s,a,\mathcal{D}_t)$ and take expectation with respect to
$S'\sim p(\cdot\mid s,a,\mathcal{D}_t)$.
By the entropy form of conditional mutual information,
\begin{align*}
\MI(z;S'\mid s,a,\mathcal{D}_t)
&=
\Entropy(z\mid s,a,\mathcal{D}_t)
-\Entropy(z\mid S',s,a,\mathcal{D}_t) \\
&=
\Entropy(z\mid \mathcal{D}_t)
-\mathrm{E}_{S'\sim p(\cdot\mid s,a,\mathcal{D}_t)}
\!\bigl[\Entropy(z\mid S',s,a,\mathcal{D}_t)\bigr],
\end{align*}
where the second line uses that $(s,a)$ is conditioned/fixed for the decision query.
Now apply Bayes' rule for posterior updating after observing one transition:
\[
p(z\mid S',s,a,\mathcal{D}_t)=p(z\mid \mathcal{D}_t\cup(s,a,S')).
\]
Hence
\[
\Entropy(z\mid S',s,a,\mathcal{D}_t)
=
\Entropy\!\bigl(z\mid \mathcal{D}_t\cup(s,a,S')\bigr),
\]
so
\[
\MI(z;S'\mid s,a,\mathcal{D}_t)
=
\Entropy(z\mid\mathcal{D}_t)
-\mathrm{E}_{S'}\!\bigl[
\Entropy(z\mid\mathcal{D}_t\cup(s,a,S'))
\bigr]
=\IG(s,a),
\]
which is exactly \Cref{def:ig}.
\end{proof}

\begin{theorem}[IG equals expected posterior KL]
\label{thm:ig_ekl}
\[
\IG(s,a)=
\mathrm{E}_{S'\sim p(\cdot\mid s,a,\mathcal{D}_t)}
\bigl[\KL(p(z\mid\mathcal{D}_t\cup(s,a,S'))\,\|\,p(z\mid\mathcal{D}_t))\bigr].
\]
\end{theorem}
\begin{proof}
From \Cref{thm:ig_mi}, $\IG(s,a)=\MI(z;S'\mid s,a,\mathcal{D}_t)$.
Using the KL form of conditional mutual information with
$(U,V,W)=(z,S',(s,a,\mathcal{D}_t))$,
\[
\MI(z;S'\mid s,a,\mathcal{D}_t)
=
\mathrm{E}_{S'\sim p(\cdot\mid s,a,\mathcal{D}_t)}
\!\left[
\KL\!\left(
p(z\mid S',s,a,\mathcal{D}_t)\,\|\,p(z\mid s,a,\mathcal{D}_t)
\right)
\right].
\]
Again, for fixed $(s,a)$ we have
$p(z\mid s,a,\mathcal{D}_t)=p(z\mid\mathcal{D}_t)$, and after one observed transition,
$p(z\mid S',s,a,\mathcal{D}_t)=p(z\mid\mathcal{D}_t\cup(s,a,S'))$.
Substituting gives
\[
\IG(s,a)=
\mathrm{E}_{S'\sim p(\cdot\mid s,a,\mathcal{D}_t)}
\bigl[\KL(p(z\mid\mathcal{D}_t\cup(s,a,S'))\,\|\,p(z\mid\mathcal{D}_t))\bigr].
\]
\end{proof}

\subsection{What IG does \emph{not} imply under rule shift}
The identities above are \emph{exact} under a well-specified Bayesian model, but they do \emph{not} imply that maximizing IG improves reward or OOD transfer.
High IG only means $S'$ is informative about $z$; in \Tape{}, goal-reaching actions need not maximize IG.
Under learned models, ``IG'' is computed from an approximate posterior and can be miscalibrated under holdout rules; under nonstationary $z$, exploration may not align with long-horizon return.
For full derivations and additional examples, see standard references on Bayesian experimental design and mutual information; we omit lengthy textbook material here.

\section{Statistical Reporting Details}
\label{app:stats}
\subsection{Bootstrap confidence intervals over seeds}
Let $x_1,\dots,x_n$ be final performance values per seed (e.g., OOD success averaged over the last $K$ checkpoints).
A bootstrap CI is obtained by resampling $\{x_i\}_{i=1}^n$ with replacement $B$ times, computing the mean each time, and taking the 2.5\% and 97.5\% quantiles of these bootstrap means.
\subsection{Paired bootstrap for ID--OOD drop}
For each seed $i$, compute $d_i=x^{\mathrm{ID}}_i-x^{\mathrm{OOD}}_i$.
Bootstrap resample the paired tuples (equivalently, resample the $d_i$ directly) and compute CIs on the mean drop.
This preserves correlation between ID and OOD within a seed.
\subsection{Hypothesis tests (diagnostic only)}
We include Welch-style tests comparing OOD seed distributions between methods as a diagnostic.
Given multiple comparisons, a conservative option is Holm correction; in this paper we emphasize effect sizes and CIs as primary evidence.

\section{Clarifications and Open Technical Questions}
\label{app:clarifications}
\subsection{Finite-hypothesis Bayesian diagnostic baseline (formalization and scope)}
In \Tape{}, a natural reference model uses hypothesis space $\mathcal{Z}$ (e.g., elementary rules) with prior $p_0(z)$ and transition likelihood
\[
p(s_{t+1}\mid s_t,a_t,z)=\1\!\left[s_{t+1}=F_z(G(s_t,a_t))\right]
\]
for deterministic dynamics (or a noise-relaxed likelihood in stochastic variants). The posterior update after transition $(s_t,a_t,s_{t+1})$ is
\[
p_{t+1}(z)\propto p_t(z)\,p(s_{t+1}\mid s_t,a_t,z).
\]
This yields a belief-state controller that serves as a diagnostic mid-point between model-free RL and the true-rule oracle for \emph{rule identification fidelity}. The explicit finite-rule Bayesian filter is empirically evaluated in the main text (\S\ref{sec:results}, inference-aware baseline subsection). The formalization here documents the inference model and update rule so the reported baseline and potential variants can be reproduced under a consistent protocol.

\subsection{Continuous metric definitions and unit consistency}
To remove ambiguity, we distinguish normalized and raw distances explicitly:
\[
d_{\mathrm{norm}}(s,g)=\frac{1}{L}\sum_{i=1}^{L}\1[s_i\neq g_i]\in[0,1],\qquad
d_{\mathrm{raw}}(s,g)=\sum_{i=1}^{L}\1[s_i\neq g_i]\in\{0,\dots,L\}.
\]
They satisfy $d_{\mathrm{raw}}=L\cdot d_{\mathrm{norm}}$. In this paper, strict success is always $\{d_{\mathrm{norm}}=0\}$ (equivalently $\{d_{\mathrm{raw}}=0\}$), and soft success@$\varepsilon$ uses the normalized threshold $d_{\mathrm{norm}}\le\varepsilon$. Reported ``final distance'' and ``AUC distance'' are normalized unless a caption explicitly states raw units.

If readers observe near-equality between ``final distance'' and $1-\text{success}$ in some settings, that indicates a near-binary endpoint distribution rather than a metric definition bug. Specifically, with $D$ as endpoint normalized distance and $S=\1[D=0]$, we have
\[
\mathbb{E}[D]=(1-\mathbb{E}[S])\cdot \mathbb{E}[D\mid D>0].
\]
When failures cluster near maximal distance, $\mathbb{E}[D\mid D>0]\approx 1$, so $\mathbb{E}[D]\approx 1-\mathbb{E}[S]$. This is a property of outcome geometry under the current regime, not an identity enforced by the metric definition.

\subsection{Oracle-normalized score stability and scope}
The oracle-normalized score $\mathrm{ON}(p)=100\,p/p_{\mathrm{oracle}}$ is conditional on the evaluation protocol used to estimate $p_{\mathrm{oracle}}$ (rule mix, horizon, initial-state distribution, and planner budget). Therefore, ON values should be compared within matched protocols; across protocol shifts (e.g., different rule-type composition or $H=64$), the correct reference is a re-estimated $p_{\mathrm{oracle}}$ under that same condition. Values above 100 indicate outperforming the specific budgeted planner reference under the matched protocol. We avoid claims of cross-protocol invariance for ON without this recalibration.

\subsection{Methodological gaps and feasible follow-up baselines}
Two baseline gaps remain explicit in this paper: (i) richer world-model variants that better match binary-local CA structure (e.g., discrete latents or automata-aware inductive biases), and (ii) broader task-inference designs beyond one PEARL-style variant (e.g., recurrent memory and online Bayesian belief updates). These are feasible and informative extensions, but they are not included in the current budgeted sweep; conclusions are therefore limited to the reported baseline set.

We also identify two diagnostic ablations that would refine latent-inference interpretation: training-rule count (to probe sample complexity of rule coverage) and context length / informativeness (to probe identifiability of $z$ from transition windows). Their absence does not invalidate current results, but it limits resolution on \emph{where} inference-driven gains emerge most strongly.

\subsection{Presentation clarifications and reader-facing consistency}
Four presentation points are worth making explicit. First, table artifacts or abrupt rate differences (e.g., by-type MPC vs global headline MPC) arise from different evaluation distributions and episode budgets; global and stratified entries should not be interpreted as directly interchangeable without protocol matching. Second, ``final distance'' definitions are unit-consistent with normalized Hamming distance in main comparison tables; any raw-unit presentation is captioned explicitly. Third, for scale anchoring, a difference of 0.02--0.03 in normalized Hamming distance at $L{=}32$ corresponds to about 0.64--0.96 cells at episode end; this is small in absolute terms and should be read jointly with uncertainty and calibrated references. Fourth, in sparse-success regimes, Table~\ref{tab:extended_main} and Table~\ref{tab:continuous_metrics} are intended to be read together (strict endpoint exactness plus ON-normalized/continuous structure), while split-robustness plots primarily characterize OOD-level variance across partitions rather than establishing a fixed split-wise ID--OOD drop band for every partition.

\subsection{Compute reporting and accessibility track (non-experimental proposal)}
The default training and evaluation schedule is summarized in Appendix~\ref{app:repro}. For accessibility, a low-cost track can be pre-registered without new method claims: reduced seed count, reduced checkpoint frequency, fixed heldout rule subset, and mandatory reporting of wall-clock time alongside interaction steps. This preserves comparability while lowering entry cost for compute-constrained labs.

\subsection{Artifact availability plan}
To make replication concrete under anonymized review constraints, we specify an artifact plan at the script/interface level. The release package is intended to include: (i) deterministic split artifacts and seeds used for the main tables/figures, (ii) versioned scripts mapping directly to reported outputs (oracle table, main benchmark tables, split-robustness figure, and by-type exports), (iii) an environment specification file with pinned package versions, and (iv) a permissive code license at public release. During review, claims in this paper are tied to these deterministic artifacts by file/script names documented in Appendix~\ref{app:repro}.


\end{document}